\documentclass[lettersize,journal]{IEEEtran}
\usepackage{amsthm}
\usepackage{amsmath,amsfonts}
\usepackage{algorithmic}
\usepackage{algorithm}
\usepackage{array}
\usepackage[caption=false,font=normalsize,labelfont=sf,textfont=sf]{subfig}
\usepackage{textcomp}
\usepackage{stfloats}
\usepackage{url}
\usepackage{verbatim}
\usepackage{graphicx}
\hyphenation{op-tical net-works semi-conduc-tor IEEE-Xplore}
% updated with editorial comments 8/9/2021

\usepackage{bm}
\usepackage{accents}
\usepackage{xcolor}
\usepackage{wrapfig}
\usepackage{todonotes}
\usepackage{color}
\usepackage[utf8]{inputenc}
\usepackage[T1]{fontenc}
\usepackage{hyperref}
\usepackage{url}
\usepackage{booktabs}
\usepackage{amsfonts}
\usepackage{nicefrac}
\usepackage{microtype}
\usepackage{hhline}
\usepackage{multirow}
\usepackage{soul}
\usepackage[shortlabels]{enumitem}

\usepackage[numbers]{natbib}
\bibliographystyle{IEEEtranN}

\newcommand{\commentout}[1]{}

\newcommand{\LL}[1]{{\color{blue}[LL: #1]}}
\newcommand{\ap}[1]{{\color{orange}[AP: #1]}}
\newcommand{\MW}[1]{{\color{red}[MW: #1]}}

\newcommand{\x}{\mathbf{x}}
\newcommand{\w}{\mathbf{w}}
\DeclareMathOperator*{\argmax}{arg\,max}

\newtheorem{definition}{Definition}
\newtheorem{lemma}{Lemma}
\newtheorem{corollary}{Corollary}
\newtheorem{theorem}{Theorem}
\newtheorem{proposition}{Proposition}
\newtheorem{remark}{Remark}

\DeclareMathOperator\erf{erf}

\begin{document}

\title{On the Robustness of Bayesian Neural Networks\\ to Adversarial Attacks}

\author{Luca Bortolussi,\thanks{Luca Bortolussi with the Department of Mathematics, Informatics and Geosciences, University of Trieste, Trieste, Italy.}
Ginevra Carbone,\thanks{Ginevra Carbone with the Department of Mathematics and Geosciences, University of Trieste, Trieste, Italy.}
Luca Laurenti,\thanks{Luca Laurenti is with Delft Center of Systems and Control, TU Delft University, The Netherlands.}
Andrea Patane,\thanks{Andrea Patane with the School of Computer Science and Statistics, Trinity College, Dublin, Ireland.}
Guido Sanguinetti,\thanks{Guido Sanguinetti is with SISSA, International School of Advanced Studies, Trieste, Italy and with the School of Informatics, University of Edinburgh, Edinburgh, United Kingdom.}
Matthew Wicker\thanks{Matthew Wicker is with the Department of Computer Science, 
University of Oxford, Oxford, United Kingdom.}}
        % <-this % stops a space
%\thanks{This paper was produced by the IEEE Publication Technology Group. They are in Piscataway, NJ.}% <-this % stops a space
%\thanks{Manuscript received April 19, 2021; revised August 16, 2021.}

% The paper headers
%\markboth{Journal of \LaTeX\ Class Files,~Vol.~14, No.~8, August~2021}%
%{Shell \MakeLowercase{\textit{et al.}}: A Sample Article Using IEEEtran.cls for IEEE Journals}

%\IEEEpubid{0000--0000/00\$00.00~\copyright~2021 IEEE}
% Remember, if you use this you must call \IEEEpubidadjcol in the second
% column for its text to clear the IEEEpubid mark.

\maketitle

\begin{abstract}
Vulnerability to adversarial attacks is one of the principal hurdles to the adoption of deep learning in safety-critical applications. Despite significant efforts, both practical and theoretical, training deep learning models robust to adversarial attacks is still an open problem. In this paper, we analyse the geometry of adversarial attacks in the over-parameterised limit for Bayesian Neural Networks (BNNs). We show that, in the limit, vulnerability to gradient-based attacks arises as a result of degeneracy in the data distribution, i.e., when the data lies on a lower-dimensional submanifold of the ambient space. As a direct consequence, we demonstrate that in this limit BNN posteriors are robust to gradient-based adversarial attacks. Crucially, by relying on the convergence of infinitely-wide BNNs to Gaussian Processes (GPs), we prove that, under certain relatively mild assumptions, the expected gradient of the loss with respect to the BNN posterior distribution is vanishing, even when each neural network sampled from the BNN posterior is vulnerable to gradient-based attacks. Experimental results on the MNIST, Fashion MNIST, and a synthetic dataset with BNNs trained with Hamiltonian Monte Carlo and Variational Inference, support this line of arguments, empirically showing that BNNs can display both high accuracy on clean data and robustness to both gradient-based and gradient-free adversarial attacks. 
\end{abstract}

\begin{IEEEkeywords}
    Bayesian Neural Networks, Adversarial Attacks, Adversarial Robustness, Bayesian Inference
\end{IEEEkeywords}

\section{Introduction}

Adversarial attacks are small, potentially imperceptible, perturbations of test inputs that can lead to catastrophic misclassifications in high-dimensional classifiers such as deep Neural Networks (NN).
Since the seminal work of \citet{szegedy2013intriguing} adversarial attacks have been intensively studied and even highly accurate state-of-the-art deep learning models, trained on very large data sets, have been shown to be susceptible to such attacks \citep{goodfellow2014explaining,zhang2019adversarial}. In the absence of effective defenses, the widespread existence of adversarial examples has raised serious concerns about the security and robustness of models learned from data \citep{biggio2018wild,zhuo2023security}.
As a consequence, the development of machine learning models that are robust to adversarial perturbations is an essential pre-condition for their application in safety-critical scenarios, such as autonomous driving, where model failures can lead to fatal or costly accidents.

Many adversarial attack strategies are based on identifying directions of high variability in the loss function by evaluating the gradient w.r.t.\ to the neural network input \citep{goodfellow2014explaining,pgdattack}. Since such variability can be intuitively linked to uncertainty in the prediction, Bayesian Neural Networks (BNNs) \citep{neal2012bayesian,jia2023energy,li2020continual,chien2015bayesian} have been recently suggested as a more robust deep learning paradigm, a claim that has also found empirical support  \citep{feinman2017detecting,wicker2021bayesian,bekasov2018bayesian,liu2018adv,yuan2020gradient}. However, neither the source of this robustness, nor its general applicability are well understood mathematically.

In this paper we show a remarkable property of BNNs: in a suitably defined  limit, we prove that the gradients of the expected loss function of an infinitely-wide BNN w.r.t.\ the input vanish. 
Our analysis shows that adversarial attacks for highly accurate NNs arise from the low dimensional support of the data generating distribution. By averaging over nuisance dimensions, {under certain assumptions on the geometry of the space where the data come from, assumed to be a manifold,} BNNs achieve zero expected gradient of the loss and are thus,  provably immune to gradient-based adversarial attacks. Specifically, we first show that, for any neural network achieving zero loss, adversarial attacks arise in directions orthogonal to the data manifold. Then, we rely on the submanifold extension lemma \citep{leeintroduction} to show that in the limit of infinitely-wide layers, for any neural network and any weights set there exists another weights set (of the same neural network architecture) achieving the same loss and with opposite loss gradients orthogonal to the data manifold on a given point. % \ap{Something missing in this sentence. Also you say in the limit of infinitely-wide neural network for any neural network, there exists another neural networks, which doesn't read well to me. Because it is not clear that the two neural networks have the same architecture and only differ in weight configuration.}\gc{right, sounds better?} %(as the NNs achieve zero loss, the gradient will be non-zero only in directions orthogonal to the data manifold). 
Finally, {  by relying on the convergence of BNNs to Gaussian Processes (GPs) \citep{neal2012bayesian} and under the assumption that the data manifold is a subspace}, we show that for infinitely-wide BNNs  the expectation of the gradient w.r.t.\ the posterior distribution in a direction orthogonal to the data manifold  vanishes. Crucially, our results guarantees that, in the limit, BNNs' posteriors are provably robust to gradient-based adversarial attacks even when neural networks sampled from the posterior are vulnerable to such attacks. %\ap{Rather than "each" which we can't show and is probably not the case, I suggest we rephrase to "...even when neural networks sampled from the posterior are ..."}

We experimentally support our theoretical findings on various BNN architectures trained with Hamiltonian Monte Carlo (HMC) and with Variational Inference (VI){\color{blue}, specifically Bayes by Backprop \citep{blundell2015weight},} on MNIST, Fashion MNIST and the half moons datasets, empirically showing that the magnitude of the gradients decreases as more samples are taken from the BNN posterior. %thus confirming our results also for the finite width and finite data setting. 
We then explore the robustness of BNNs to adversarial attacks {experimentally in these settings}. %, empirically demonstrating their robustness against these attacks. %\ap{I think I'd tone this last statement down a bit. It is very general in its form, while the empiricalresults provided are not. I.e., we empirically show it in a few datasets, for a few methods etc. and even so we don't show that attacks are absolutely unseccesful, but only that the BNNs are empirically more robust than their deterministic counterpart.}\gc{yes, but it refers only to the theoretical contributions... maybe we should stress that it holds in the theoretical limit.}\LL{Rephrased}
In particular, we conduct a large-scale experiment on thousands of different neural networks, empirically finding that, in the cases here analysed, for BNNs higher accuracy tend to correlate with higher robustness to gradient-based adversarial attacks, contrary to what observed for deterministic NNs trained via standard Stochastic Gradient Descent (SGD). Finally, we also investigate the robustness of BNNs to gradient-free adversarial attacks, empirically showing that BNNs are substantially more robust than their deterministic counterpart even in this setting.

In summary, this paper makes the following contributions:
\begin{itemize}
    \itemsep0em 
   % \item A theoretical framework to analyse adversarial robustness of infinitely wide BNNs in the large data limit. \ap{I don't understand this as a contribution?}
    \item A proof that, in the infinitely-wide layers and large data limit setting, the gradient of the loss function w.r.t.\ the input only preserves the component which is orthogonal to the data manifold (Section \ref{sec:GradAdvAttacks}) and that for any weights set of a neural network there exists  another weight set with same loss and opposite orthogonal gradients (Section \ref{sec:AdvRobBNNs}). 
    \item A proof that for Gaussian Processes (GPs), and consequently for infinitely-wide limit BNNs, the expected posterior gradient of the loss vanishes when projected in a direction orthogonal to the data manifold, thus providing robustness to BNNs (Section \ref{sec:adv_robustness_bayesian_averaging}).
    \item Experiments showing empirically that BNNs are more robust to both  gradient-based and gradient-free attacks than their deterministic counterpart and can resist the well known accuracy-robustness trade-off (Section \ref{sec:empirical_results}).\footnote{The code for the experiments can be found at %\url{https://github.com/ginevracoal/robustBNNs}.
\url{https://github.com/matthewwicker/OnTheRobustnessOfBNNs}.
    } 
\end{itemize}
A preliminary version of this work appeared in \cite{carbone2020robustness}. This work extends \cite{carbone2020robustness} in several aspects. In \cite{carbone2020robustness} we proved that given an infinitely-wide neural network with zero loss and a non-zero orthogonal gradient to the data manifold, there exists another zero-loss neural network with opposite orthogonal gradient and use this result to conjecture that under certain conditions BNNs may achieve zero gradients in expectation . In this paper, {  in Section \ref{sec:adv_robustness_bayesian_averaging}, this conjecture is shown to be true and proved explicitly by relying on the convergence of BNNs and Gaussian processes (GPs).} Furthermore, we substantially extend the discussion and the theoretical analysis,  and improve the empirical results with gradient-free adversarial attacks {  and a comparison between the robustness of GPs and BNNs}.

The paper is structured as follows. In Section \ref{sec:background} we introduce background on infinitely-wide neural networks and BNNs. In Section \ref{sec:GradAdvAttacks} we will first show that for highly accurate neural networks the gradient of the loss is non-zero only in directions orthogonal to the data manifold. Then, in Section \ref{sec:AdvRobBNNs} we will prove that for any neural network and weight set there exists another weight set of the same neural network with same loss and opposite orthogonal gradients to the data manifold. By averaging over these weight sets and relying on the convergence of BNNs to Gaussian processes (GPs), in Section \ref{sec:adv_robustness_bayesian_averaging} we prove that for a BNN that achieves zero loss on the data manifold the expected gradient of the loss is zero, thus making them robust to adversarial attacks. Section \ref{sec:limitations} discusses consequences and limitations of our results.
Empirical results in Section \ref{sec:empirical_results} will support our theoretical findings.

\subsection{Related Work}

{ Adversarial attacks for deterministic neural networks have been the subject of extensive analyses \cite{liu2021training,wiyatno2019adversarial,yuan2019adversarial,gallego2020incremental,panda2020quanos,ilyas2019adversarial,lin2019defensive}, which have led to the development of multiple defence and attack methods over the recent years \cite{biggio2018wild,liu2021trainingDet,wang2023generating}. 
Adversarial examples have been found to be so widespread in state-of-the-art deterministic NNs that they have even been hypothesized to be an intrinsic property of certain models \cite{ilyas2019adversarial} or datasets \cite{tsipras2018robustness}.} 
{ Interestingly, early experimental results with BNNs suggested a diametrically different behaviour than that of their deterministic counter-part.
In fact, empirical observations on the increased adversarial robustness of BNNs have been made in various works both against gradient-based adversarial attacks \citep{pang2021evaluating,smith2018understanding,uchendu2021robustness} and gradient-free adversarial attacks \citep{yuan2020gradient} as well as on reinforcement learning settings \citep{michelmore2019uncertainty}{   and more recently also on relatively large convolutional neural network architectures \cite{pang2021evaluating}}. }However, while these works present empirical evidences on the robustness of BNNs, they do not give any theoretical justification on the mechanisms that lead to BNN robustness. First attempts to understand the robustness properties of BNNs have been considered in \cite{bekasov2018bayesian,gal2018sufficient}.
In particular, \cite{bekasov2018bayesian} defined Bayesian adversarial spheres and empirically showed that, for BNNs trained with HMC, adversarial examples tend to have high uncertainty. Instead \cite{gal2018sufficient} derived sufficient conditions for idealised BNNs to avoid adversarial examples. However, it is unclear how such conditions could be checked in practice, as it would require one to check that the BNN architecture is invariant under all the symmetries of the data. 

Because of the capabilities of BNNs to model { epistemic} uncertainty, which can be intuitively linked to their robustness properties, various approaches have been proposed to detect adversarial examples for BNNs. {  \citep{feinman2017detecting,rawat2017adversarial} propose to use the uncertainty on the predictions of a BNN as a way to flag adversarial attacks. However, such methods have been shown to be easily fooled by appropriately crafted adversarial attacks \cite{carlini2017adversarial,grosse2018limitations}. Consequently, }
formal verification methods \citep{wicker2020probabilistic,berrada2021verifying} to detect adversarial examples for BNNs have been introduced. These methods have been followed by techniques to perform adversarial training for BNNs \citep{zhang2021robust,ye2018bayesian,liu2018adv,wicker2021bayesian}, where additional robustness constraints or penalties are considered directly at training time. Interestingly, empirical results obtained with such techniques, highlighted how, in the Bayesian settings, high accuracy and high robustness often are positively correlated with each other. The theoretical framework we develop in this paper further confirms and grounds these findings.

%The results of these methods illustrate how a more accurate posterior  generally leads to higher robustness, thus giving further confirmation of the theoretical findings of this paper. %For instance, it is shown how BNNs trained with Monte Carlo dropout-based models tend to be less robust compared to HMC trained BNNs. %This confirms both our theoretical and empirical findings.

%However, most of these approaches perform posterior inference by means of Monte Carlo dropout-based models \citep{carlini2017adversarial}, which have been effectively attacked in \citep{gal2018sufficient}.

%Statistical techniques for the quantification of adversarial robustness of BNNs have been introduced by \citep{cardelli2019statistical} and employed in \citep{michelmore2019uncertainty} to detect erroneous behaviours in the context of autonomous driving. \cite{carbone2021randomprojections} showed how ensembles of neural networks combined with dimensionality reduction of the inputs are able to improve the adversarial robustness of deterministic NNs.
%Furthermore, \cite{ye2018bayesian} and \cite{liu2018adv} formulated adversarial training under the Bayesian framework and showed improved performances with respect to non-Bayesian adversarial training approaches. Regularization of the Lipschitz constant of deep neural networks yields an improvement in robustness \citep{miyato2018spectral,gouk2021regularisation}. \cite{zhang2021robust} recently applied such technique to Bayesian Neural Networks, while also reducing the epistemic uncertainty of predictions.

\section{Background}
\label{sec:background}
%\subsection{Notation}

% \input{Sections/notation_table}

%\LL{Gienvra, can you reduce the table to a text subsection in here? See, for instance Section 2.1 in $https://arxiv.org/pdf/2202.10178.pdf$. As the paper is not super long, it would look better to have a compact Section rather than a big table :) }
%We aim to learn a function 
Let $f^{true}:\mathcal{M}\to\mathbb{R}$ be a function defined on a data manifold $\mathcal{M}\subseteq X \subseteq \mathbb{R}^d$ with $X$ being the ambient (or embedding) space.\footnote{For simplicity of presentation, we assume a scalar output. The results of this paper naturally extend to the multi-output case by treating each output component  similarly to the single output case.} 
We consider the problem of approximating $f^{true}$ via the learning of an 
%Given $f^{true},$ we consider its approximation via an
$M+1$ layers neural network $f(\cdot,\w)$,  with $\w \in \mathbb{R}^{n_\w}$ being the aggregate vector of weights and biases. %, which we call weights set or weights configuration. %and $f$ defining a neural network architecture. 
Formally, for $\x=(x_1,...,x_d) \in X$, $f(\x,\w)$ is defined iteratively over the number of layers as:
\begin{align}
    f^{(1)}_i(\x)&= \sum_{j=1}^d w^{(1)}_{ij}x_j+b^{(1)}_i \label{eq:nn1} \\
    f^{(m)}_i(\x)&=  \sum_{j=1}^{n_{m-1}} w^{(m)}_{ij} \phi(f^{(m-1)}_j(\x))+b^{(m)}_i, \label{eq:nn2} \\
    f(\x,\w) &= f^{(M+1)}(\x), \label{eq:nn3}
\end{align}
$\textrm{for} \; m=2,\ldots,M+1$, where $n_m$ is the number of neurons in the $m$-th layer and $\phi$ is the activation function – which we assume to be continuous and with bounded derivatives.
%\ap{Is the following sentence needed?}
%We will generally refer to layers indexed by $m=1,\ldots,M$ as \emph{hidden layers} and to the $M+1$-th layer as the \emph{output layer}, whose size is $n_{M+1}=1$.
%Notice that, for reasons that will become apparent in Section \ref{sec:overparameterizeD_Nns}, we normalise the output of each layer in the NN by the number of hidden units it is made of. 
In order to learn the weights of $f(\x,\w)$ one considers a dataset $D_N$ composed of $N$ points, %that mimic the behaviour of the true function, $f^{true}$, i.e., 
$D_N = \{ (\x_i,y_i) \; |\; \x_i \in \mathcal{M},\, y_i \in \mathbb{R},\, i=1,\ldots,N \}$.
In a frequentist fashion, one can then use the dataset to quantify the distance between $f^{true}$ and $f(\cdot,\w)$ by evaluating it on a loss function $L(\x,\w)$  of the form $L(\x,\w) = \ell( f(\x,\w), f^{true}(\x))$, with $\ell(\cdot,\cdot)$  chosen accordingly to the semantic of the problem at hand (e.g., square loss or cross-entropy).\footnote{For simplicity of notation we omit the explicit dependence on the true function from the loss.}  
Intuitively, minimisation of the loss function over the weight vector $\w$ leads to increasing fit of $f(\x,\w)$  to $f^{true}(\x)$, with zero-loss indicating that the fit is exact {on $D_N$}. %, so that the problem of approximating $f^{true}$ boils down to the problem of minimizing the loss function over $\w$ (e.g., by employing gradient descent algorithms). %\ap{I think we should somewhere mention this is the frequentist/deterministic approach, because for Bayesian there's no loss minimisation, no?}
%After learning, $f(\x,\w)$ is then used at test time to perform predictions on previously unseen inputs.

In this paper, we aim at analysing the adversarial robustness of $f(\x,\w)$. In order to do so, we will rely on crucial results from Bayesian learning and on the properties of infinitely-wide neural networks. The remainder of this section is dedicated to the review of such notions.

%\gc{It's better to introduce the notation for infinitely wide NNs, GPs and probability distributions directly in later sections.}

\subsection{Infinitely-Wide Neural Networks}
%\ap{Notation mismatch: In the introduction of neural networks the last hidden layer has size $n_{M-1}$ ($n_M$ is actually the output layer which has size=$1$). In the rest of the paper instead, $n_M$ is used as size of the last hideen layer. The easiset thing to do is probably to modify the definiton of the NN, since the last layer has always size $1$, and $n_{M-1}$ would be pretty ugly to bring around everywhere.}\gc{Good catch. I changed the definition and passed to $M+1$ layers in the network.}
%In order to understand the properties of neural networks against adversarial examples, we rely on infinitely-wide neural networks. 
In our analysis we will rely on the notion of infinitely-wide NNs, i.e. NNs with an infinite number of neurons. % in the last hidden layer. %We will leverage properties of NNs that hold in this limit to ensure robustness against the attacks.
 
%\ap{I think this sentence is wrong and a bit misleading/confusing. First, ``the properties of neural networks against adverarial attacks'' doesn't mean much. It probably meant to say the behaviour of neural networks against adversarial attacks? But even so, we don't rely on infinite wide NNs ``to understand the behaviour of NNs in adversarial settings''. It is misleading and it might make appear as if all of our results apply it for infinite wide NNs. Rather we show that (1) deterministic NNs with zero loss (which is an ideal setting in their favour, so to say) still fail to be robust, indipendently of how wide they are. (2) That Bayesian Neural Network becomes provably robust under certain assumptions, in particular when they approach the infinite wide limit.
%I would personally rephrase it, perhaps even something as simple as: ``In our theorethical analysis of the behaviour of Bayesian Neural Networks under adversarial attack settings, we will rely on the concept of infinite-wide neural networks, i.e., NNs with an infinite number of hidden unites.''}\gc{totally agree}
 
\label{sec:overparameterizeD_Nns}
%\LL{Use this subsection to introduce the neural network notation and structure we will use in the paper: need to introduce neural networks, overparameterized NNs (in our context probably that all layer goes to infinite), and universal approximation theorem. Also, I am not sure we need to formally state Theorem 1, probably just a sentence explaining it would suffice (let's introduce just what we need in the main text and right now Theorem 1 is not really used much directly)}

%We briefly summarize the main results on global convergence of overparameterized neural networks \citep{du2018gradient,mei2018mean,rotskoff2018neural} and precisely the Universal Approximation Theorem, stating that fully trained overparameterized NNs achieve zero loss on the whole data manifold $\mathcal{M}$ in the infinite data limit. 

\begin{definition}[Infinitely-wide neural network]\label{def:infiniteNN}
Consider a family of neural networks $\{f(\x,\w_{n_\w})\}_{n_\w>0}$ %\LL{These notation does not match with the one above used to define NNs} 
of Equations~\eqref{eq:nn1}--\eqref{eq:nn3}, with a fixed number of neurons for $m=1,\ldots,M-1$ and a variable number of neurons $n_M$ in the last hidden layer. We say that 
\begin{equation}
    f^\infty(\x) := \lim_{n_M \to \infty} f(\x,\w_{n_\w}) \quad \forall \x \in X, \label{eq:infiniteNN}
\end{equation}
is an infinitely wide neural network if the limit above exists and if the resulting function defines a mapping from $X$ to $\mathbb{R}$.
Furthermore, we call $\mathcal{F}$ the set of such limit functions.
\end{definition}
%
%\ap{I think we can get rid of this whole paragraph imho}
%Notice that normalising by $\frac{1}{n_{m-1}}$ in Equation \eqref{eq:nn2} is done to avoid that the limit in Equation~\eqref{eq:infiniteNN} would diverge to infinity. However, we remark that even in this case 
%the limit does not always define a finite function. %For instance, if the values of the weights in the last hidden layers %$w_{n_m}$ \LL{Check notation if correct} %\gc{not sure about this, do we mean the values of $w^{(M)}_{ij}$?}\ap{Yes, there is an index missing, I have added the $i$. We define the limit over $n_m$, and $w^{(M)}_{i n_m}$ is the value of the weight over the last neuron of the last hidden layer (for each output layer $i$). If its values grows exponentially as $n_m$ goes to infinity, then the whole thing diverges exponentially as well. It was only an example, so if it is not clear, we can come up with something different, or remove it altogether} 
%grows exponentially with $n_M$, then the Equation~\eqref{eq:nn2} diverges.
%However, it can be shown, for instance, that extracting weights from a fixed Gaussian distribution guarantees convergence \citep{neal2012bayesian}.
The interest behind the set of infinitely-wide neural networks lies in the fact that they are universal approximators \citep{cybenko1989approximation,hornik1991approximation}.\footnote{Notice that the limit in Definition \ref{def:infiniteNN} is taken only w.r.t.\ the last hidden layer. Similar results, albeit with additional care needed for the definition of the limiting sequence, can be obtained by taking the limit w.r.t.\ all the hidden layers \citep{matthews2018gaussian}.} More precisely, under the assumption that the true function $f^{true}$ is continuous, we have that:
\begin{align}\label{eq:universal_theorem}\nonumber
    \forall \epsilon > 0, \; \exists f^* \in \mathcal{F} &\; \textrm{s.t.} \; \forall x \in \mathcal{M},\\ &\; | f^{true}(x) - f^*(x) | < \epsilon.
\end{align}
That is, $\mathcal{F}$ is dense in the space of continuous functions. %, or, in other words, any given continuous function can be approximated by an infinite-wide neural network up to any given tolerance $\epsilon>0$.
Furthermore, it is possible to show that any smooth function with bounded derivatives can be represented exactly by an infinitely wide NN with bounded weights norm (i.e.\ with bounded sum of the squared Euclidean norm of the weights in the network) \citep{ongie2020function}. %\ap{Not convinced about including this sentence, since we state upfront that we assume bounded activation. A reader could get confused. Still it's a useful result to justify the assumption, so maybe we can keep it but explaining better the bounded activation assumption or the role of this latest example brought up?}
We will rely on these crucial properties of infinitely-wide neural networks to reason about their behaviour against adversarial attacks. 
We remark that the existence of such a neural network $f^*$ approximating the true underlying function does not necessarily mean that it will be automatically found through gradient descent-based training on a finite dataset. However, 
%Because of Equation~\eqref{eq:universal_theorem}, we know that as $f(\x,\w)$ grows to an infinitely-wide NN, there will exist a value for $\w$ so that $f(\x,\w)$  will resemble $f^{true}(\x)$ for every $\x$. 
%Furthermore, 
recent results \citep{rotskoff2018neural} have shown that, under mild conditions, the loss function, as a functional over the distribution over weights of an infinitely-wide NN, is a convex functional, and hence the gradient flow  of stochastic gradient descent will converge to a unique distribution over weights.
%However, the loss-landscape is in general non-convex, so an optimisation algorithm based on gradient descent would in general not be able to retrieve the optimal $\w$  by minimising the loss function.
%
%Nevertheless, recent results \citep{rotskoff2018neural} have shown that in the limit of infinitely many (well-behaved) data, the loss-landscape for an infinitely wide NN becomes convex, and stochastic gradient descent with decaying learning rate converges to the global optimal solution.
That is, given an infinitely-wide NN $f^\infty$ and a sequence of datasets $\{D_N\}_{N>0}$ of cardinality $N$ extracted from the data manifold $\mathcal{M}$, we have that:
\begin{align}\label{eq:zero_loss_regime}
    \lim_{N \to \infty} \ell\big( f_{D_N}^\infty(\x), f^{true}(\x)\big) = 0, \quad \forall \x \in \mathcal{M}
\end{align}
where $f^\infty_{D_N}$ represents the infinitely-wide NN trained on $D_N$ until convergence.
In Section \ref{sec:GradAdvAttacks} we will show how infinitely-wide deterministic neural networks, i.e., neural networks where each weight or bias is a scalar, can be vulnerable to adversarial attacks even when the loss is zero, while infinitely-wide Bayesian neural networks, under certain assumptions on the geometry of the  data manifold, are provably robust to gradient-based adversarial attacks.

\subsection{Bayesian Neural Networks}
%\LL{We can extend the way we introduce BNNs as we do not have space issues now. Also, we can define in here what we mean by an uninformative prior}
Bayesian modelling aims to capture the uncertainty of data driven models by defining ensembles of predictors \citep{barber2012bayesian}; it does so by turning model parameters %(and consequently predictions) 
into random variables. In the NN scenario, %for a NN $f(\mathbf{x},\mathbf{w})$ with input $\mathbf{x}$ and network parameters (weights and biases) $\mathbf{w}$, 
one starts by putting a prior measure over the network weights $p(\mathbf{w})$ \citep{neal2012bayesian}.\footnote{In the remainder of this paper, we employ the common notation of indicating density functions with $p$ and their corresponding probability measures with $P$.} %Note that in this paper we denote with $P$ a probability measure, while $p$ represents the related density.
The fit of the network with weights $\mathbf{w}$ to the data $D$ is assessed through the likelihood $p(D\vert\mathbf{w})$
\citep{bishop}.\footnote{Notice that in the Bayesian setting the likelihood is a transformation of the loss function used in deterministic settings. In the rest of the paper we use both terminologies, and the loss is not to be confused with that used in Bayesian decision theory \citep{bishop}.} Bayesian inference then combines likelihood and prior via the Bayes theorem to obtain a {\it posterior} distribution over the NN parameters 
\begin{align}\label{eq:Bayes}
p\left(\mathbf{w}\vert D\right)\propto  p\left(D\vert\mathbf{w}\right)p\left(\mathbf{w}\right).
\end{align}
%\LL{Next sentence needs to be better integrated with the text or we may even move it from here and add it as a remark}
%Maximising the likelihood function w.r.t. the weights $\mathbf{w}$ is in general equivalent to minimising the loss function in standard NNs; indeed, standard training of NNs can be viewed as an approximation to Bayesian inference which replaces the posterior distribution with a delta function at its mode. 
Unfortunately, it is in general infeasible to compute the posterior distribution exactly for non-linear/non-conjugate models such as deep NNs, so that approximate Bayesian inference methods are employed in practice. 
%Approximate Bayesian inference methods look at 
%
%Obtaining the posterior distribution exactly is in general infeasible for non-linear/non-conjugate models such as NNs. As a consequence, approximation methods are required \citep{neal2012bayesian}. 
Asymptotically exact samples from the posterior distribution can be obtained via procedures such as Hamiltonian Monte Carlo (HMC) \citep{neal2011mcmc}, while approximate samples can be obtained more cheaply via Variational Inference (VI) \citep{blundell2015weight}. 
Irrespective of the posterior inference method of choice, Bayesian empirical predictions at a new input $\mathbf{x}$ are obtained from an ensemble of $n$ NNs, each with its individual weights drawn from the posterior distribution $p(\mathbf{w}|D)$: 
\begin{equation}\begin{split}
     \langle f(\mathbf{x},\mathbf{w})\rangle_{p\left(\mathbf{w}\vert D\right)}
    \simeq \frac{1}{n}\sum_{i=1}^n f(\mathbf{x},\mathbf{w}_i)
\end{split}\label{predDist}\end{equation} 
where $\mathbf{w}_i\sim p\left(\mathbf{w}\vert D\right)$ and $\langle\cdot\rangle_{p\left(\mathbf{w}\vert D\right)}$ denotes expectation w.r.t.\ the posterior distribution $p\left(\mathbf{w}\vert D\right)$. %The ensemble of NNs yields the (empirical) {\it predictive posterior distribution} of the BNN. \ap{The equation above is incomplete, because the NN passes through the likelihood function for the computation of the predictive distribution.}\LL{I agree this is not the predictive posterior. I would simply remove the last sentence.}

%Consider $\mathcal{F}$ to be the space of functions defined from $\mathbb{R}^d$ to $\mathbb{R}^k$.

{ Note that the definition of a distribution over the weights $p(\mathbf{w})$ naturally leads to the definition of a probability measure over the set of continuous functions $f:\mathbb{R}^d \to \mathbb{R}$ that can be represented by the neural network. In particular, as common in the literature \cite{adler2010geometry,billingsley2013convergence}, we consider the probability measure $P$ generated by
the finite-dimensional distributions of the BNN, i.e., the joint distributions of $f(\x_1, {\mathbf{w}}), ..., f(\x_k, {\mathbf{w}}),$
where $k$ is an arbitrary integer and $\x_1,...\x_k \in X$.  We stress that in general not all path properties, such as continuity or differentiability, can be determined using the finite dimensional distributions. However, as the neural networks architectures considered in this paper are continuous by assumption, without any lost of generality and as common in the literature \citep{adler2010geometry}, we assume that $f(\cdot,\mathbf{w})$ is separable, i.e., countable dense subsets of input points suffice to determine the properties of $f(\cdot,\mathbf{w})$.  } 
%We notice that defining a distribution over the weights $p(\mathbf{w})$ naturally leads to the definition of a distribution over the functions in $\mathcal{F}$ through the applications of the neural network, which we denote by $p(f(\cdot, {\mathbf{w}}))$. In particular, $p(f(\cdot, {\mathbf{w}}))$ leads to a probability measure $P$ over $\mathcal{F}$ with a $\sigma-$algebra generated by sets of the form $\{f(\cdot, {\mathbf{w}})\in \mathcal{F}: f(\x_1, {\mathbf{w}})\in G_1, ..., f(\x_K, {\mathbf{w}})\in G_K\}$
%where $K$ is an arbitrary integer, $\x_1,...\x_K \in X$, and  $G_1,...,G_K \subset \mathbb{R}$ are closed intervals \citep{adler2010geometry}.
%Note that according to the above definition of $\sigma-$algebra any measurable set of functions $F\subseteq \mathcal{F}$ only depends on the behaviour of $f(\cdot,\w)$ on a countable set of input points. Nevertheless, in our setting this is not limiting. In fact, properties involving uncountable set of points can generally (and this is the case for all the properties considered in this paper) be reformulated over a countable set of inputs by relying on the continuity of each function $f(\cdot,\w)\in\mathcal{F}$ \citep{adler2010geometry}.  

{ 

In this paper, as also common in the literature \citep{blundell2015weight}, we will consider Gaussian priors $p(\mathbf{w})$.
The following result shows how an independent Gaussian prior over the parameters of an infinitely-wide BNN induces a Gaussian prior over the space of functions. 
\begin{proposition}[\citep{neal2012bayesian,matthews2018gaussian}]
\label{prop:PriorsWeighttoFunction}
Consider the following neural network
$f(\x,\w)$ with a single hidden-layer defined as 
\begin{align}
    f^{(1)}_i(\x)&= \sum_{j=1}^d w^{(1)}_{ij}x_j+b^{(1)}_i  \\
    f^{(2)}_i(\x)&=  \sum_{j=1}^{{n_1}} w^{(2)}_{ij} \phi(f^{(1)}_j(\x))+b^{(2)}_i,  \\
    f(\x,\w) &= f^{(2)}(\x).
\end{align}
Assume that to each weight and bias are associated independent normal priors such that  $w^{(1)}_{ij} \sim \mathcal{N}(0,\frac{\sigma_w^2}{d}),$ $w^{(2)}_{ij} \sim \mathcal{N}(0,\frac{\sigma_w^2}{n_1}),$ $b^{(1)}_i,b^{(2)}_i \sim \mathcal{N}(0,{\sigma_b^2}).$
Then, for ${n_1}\to\infty$, the prior on $f_{i}(\x,\w)$ converges in distribution to a Gaussian process (GP) with zero mean and  covariance function $K(\x,\x')=\sigma_b^2 + \sigma_w^2 C(\x,\x'), $
where $C(\x,\x')$ is a function dependent on the BNN architecture.
\end{proposition}
Note that, while Proposition \ref{prop:PriorsWeighttoFunction} is stated only for BNNs with one hidden layer, analogous results can be derived for the multiple hidden layer case and also for more complex architectures such as convolutional neural networks \citep{matthews2018gaussian,lee2017deep,garriga2018deep}. %\footnote{We note that while most papers focus on convergence of the finite-dimensional distributions of a BNN to those of a GP, under some standard regularity assumptions, this also guarantees the convergence over the topological space given by  the set of continuous functions equipped with the  uniform topology \cite{billingsley2013convergence}.}
 We stress that in the case of multiple hidden layers, care must be taken in how the size of the various layers goes to infinity to guarantee convergence.
In what follows, we will simply assume that for an infinitely-wide BNN, the conditions for convergence are always satisfied.

Thanks to Proposition \ref{prop:PriorsWeighttoFunction} we have that an infinitely-wide BNN is equivalent to a GP. This allows us to use the favourable analytical properties of GPs to study BNN robustness in the limit. In particular, in Section \ref{sec:adv_robustness_bayesian_averaging} we will rely on the fact that the derivative\footnote{In this paper we will always focus on mean-square derivatives \citep{adler2010geometry} for probabilistic models. Hence, in what follows, we will simply refer to them as derivatives.} of a GP is still a GP with a kernel given by the derivative of the kernel of the original GP \citep{adler2010geometry}. This is a key result that we will use to show how the orthogonal gradient of the loss for trained BNNs vanishes along the data manifold.

}

\subsection{Adversarial Attacks for Bayesian Neural Networks}
\label{sec:posterior_pred_attack}
Given an input point $\mathbf{x}\in \mathcal{M}$ and a strength (i.e.\ maximum perturbation magnitude) $\epsilon>0$, the worst-case adversarial perturbation can be defined as the point $\tilde{\mathbf{x}}$ in the $\epsilon$-neighbourhood around $\mathbf{x}$ that maximises the  loss function  $L$:%\LL{Need to specify what kind of losses we consider in this work}
$$\tilde{\mathbf{x}} := \argmax_{ \tilde{\mathbf{x}}:||\tilde{\mathbf{x}}-\mathbf{x}|| \leq \epsilon}\langle L (\tilde{\mathbf{x}},\mathbf{w})\rangle_{p\left(\mathbf{w}\vert D\right)}.$$
%\LL{Not clear what you mean}
%We prove the statements in section \ref{sec:AdvRobBNNs} in the case of the squared loss for regression problems and the cross-entropy loss for classification problems. More generally, the statements hold for any loss function such that its gradient w.r.t. the input point $\x$ is proportional to the gradient of the prediction function.
If the network prediction on $\tilde{\mathbf{x}}$ differs from the original prediction on $\mathbf{x}$, then we call $\tilde{\mathbf{x}}$ an \emph{adversarial example}.
As $f(\mathbf{x},\mathbf{w})$ is non-convex, computing $\tilde{\mathbf{x}}$ is a non-convex optimisation problem for which several approximate solution methods have been proposed. 
In this paper we will primarily focus on what arguably is the most commonly employed class among them, i.e., gradient-based attacks, that is attacks that employ the loss function gradient w.r.t.~$\x$ to maximise the loss \citep{biggio2018wild}.
One such attacks is the Fast Gradient Sign Method (FGSM) \citep{goodfellow2014explaining}  which works by approximating $\tilde{\mathbf{x}}$ by taking an  $\epsilon$-step in the direction of the sign of the gradient at $\mathbf{x}$. 
 In the context of BNNs, where attacks are  against the posterior distribution, applying FGSM yields %\ap{I don't think this is an attack on the predictive distribution, or at least is not trivially that – I'd just say posterior distribution}\LL{Good point. I modified to posterior distribution. }
\begin{align}
\tilde{\mathbf{x}} &= \mathbf{x}+\epsilon\,\mathrm{sgn}\;\langle\nabla_{\mathbf{x}}L(\mathbf{x},\mathbf{w})\rangle_{p\left(\mathbf{w}\vert D\right)}\label{BayesFGSM}\\ &\simeq\mathbf{x}+\epsilon\,\mathrm{sgn}\left(\sum_{i=1}^n\nabla_{\mathbf{x}}L(\mathbf{x},\mathbf{w}_i)\right)
\end{align}
where the final expression is a Monte Carlo approximation with $n$ samples $\mathbf{w}_i$ drawn from the posterior $p(\mathbf{w}|D)$. 
Other gradient-based attacks, as for example Projected Gradient Descent method (PGD)  \citep{pgdattack}, modify FGSM by taking consecutive gradient iterations or by scaling the attack by the magnitude of the gradient.
Crucially, however, they all rely on the gradient vector to guide the attack. 

In the following sections we will rely on the fact that the expected loss gradient in Equation \eqref{BayesFGSM} can be decomposed into its projection into a direction parallel to the data manifold and into a direction orthogonal to the data manifold. The parallel expected loss gradient naturally vanishes for very accurate neural neural networks as the loss will tend to be zero everywhere in the data manifold. Instead, the orthogonal projection will in general not be zero. However, perhaps surprisingly, we will show that for infinitely-wide Bayesian neural network also the orthogonal expected loss gradient vanishes. 

Before considering results specific to BNNs in Section \ref{sec:adv_robustness_bayesian_averaging}, in Section \ref{sec:GradAdvAttacks} we will focus on results that hold for both deterministic and Bayesian neural networks  (Lemma \ref{trivialLemma}) and that are specific to deterministic NNs (Proposition \ref{Proposition:ExtensionLemmaForNNs}), showing how these can be vulnerable to adversarial attacks even when they learn the true function perfectly.
% vanishes to zero for an infinitely-wide Bayesian neural network achieving high accuracy \ap{Check}. 
%[Spiegare Gradiente ortogonale e parallelo]
%\ap{The theory is dense, maybe we can have an overview here where we describe what is what.}
%The expressions for the Projected Gradient Descent method (PGD)  \citep{pgdattack} or other gradient-based attacks \citep{carlini2016evaluating} are analogous.
%While the results discussed in Sections \ref{sec:sec:GradAdvAttacks} and \ref{sec:AdvRobBNNs} hold for any gradient-based method, in the experiments reported in Section \ref{sec:empirical_results} we focus on the Fast Gradient Sign Method (FGSM) \citep{goodfellow2014explaining} and the Projected Gradient Descent method (PGD)  \citep{pgdattack}.

\section{Gradient-Based Adversarial Attacks for Neural Networks}
\label{sec:GradAdvAttacks}
%\LL{Need to polish the text and make it more coherent}

% \LL{Ristruttura pure. Cmq questa sezione dovrebbe servire a fare vedere che quello che conta e' la direzione ortognoale del gradiente.}

Equation \eqref{BayesFGSM} suggests a possible mechanism  through which BNNs might acquire robustness against adversarial attacks: averaging under the posterior might lead to cancellations in the expectation of the gradient of the loss. It turns out that this averaging property is intimately related to the geometry of the data manifold $\mathcal{M}$. As a consequence, in order to study the expectation of the gradient of the loss for BNNs, we first introduce  results that link the geometry of $\mathcal{M}$ to adversarial attacks.

%In this section we analyze the effect of dimensionality of the data manifold on adversarial examples and observe that for over-parameterized NNs in the limit of infinitely many data, the gradient of the loss for any test point becomes orthogonal to the data manifold.

We start with a trivial, yet important result, which holds for any neural network, both Bayesian and deterministic: for a NN that achieves zero loss on the whole data manifold $\mathcal{M}$, the loss gradient is constant (and zero) along the data manifold for any $\x\in\mathcal{M}$. Therefore,  %\ap{This last sentence is confusing. I think you mean in the open inside of M? I think the gradient is zero along the data manifold, but not for every point in the data manifold.}\LL{Yes, I meant the gradient along the data manifold, the orthogonal gradient is indeed non-zero in general},
in order to have adversarial examples the dimension of the data manifold $\mathcal{M}$ must necessarily be smaller than the dimension of the ambient space, that is, $\mathrm{dim}\left(\mathcal{M}\right)<\mathrm{dim}\left(X\right)=d,$ where $\mathrm{dim}\left(\mathcal{M}\right)$ denotes the dimension of $\mathcal{M}$.

\begin{lemma}\label{trivialLemma}
 Assume that $\mathcal{M}$ is a smooth closed manifold and that  $\forall \x \in \mathcal{M}$ $L(\mathbf{x},\w)=0$, that is  $f(\x,\w)$ achieves zero loss on $\mathcal{M}$. Then, if $f$ is vulnerable to gradient-based attacks at $\x^{*}\in\mathcal{M}$, $\mathrm{dim}\left(\mathcal{M}\right)<\mathrm{dim}\left(X \right)$ in a neighbourhood of $\x^*$, i.e. $\mathcal{M}$ is locally homeomorphic to a space of dimension smaller than the ambient space $X$. 
%\LL{Should not it be $\subseteq$ in here?} \gc{If $\mathcal{M}=\mathbb{R}^d$ we would not be able to extend the orthogonal components (zero codimension), right?}\LL{Yes, but this Lemma also shows that if the codomain is $0$, then there are no adversarial attacks. That is why I would include also explicitly this case. Then, of course, for the other theorems this is not the most interesting setting.}
% Let $\x^*\in\mathcal{M}$ s.t. $B_d(\x^*,\epsilon)\subset\mathcal{M}$, with $B_d(\x^*,\epsilon)$ being the $d$-dimensional ball centred at $\x^*$ of radius $\epsilon$ for some $\epsilon>0$. Then $f(\x,\w)$ is locally constant in $B_d(\x^*,\epsilon)$. \LL{Does this hold for any $\epsilon?$}\gc{any $\epsilon$ s.t. $B_d(\x^*,\epsilon)\subset\mathcal{M}$}\LL{WHat I meant is that right now we are assuming that if $B_d(\x^*,\epsilon)\subset\mathcal{M}$ then we are robust, which looks quite obvious. Would not make more sense to just combine this Lemma with Corollary 3? }
\end{lemma}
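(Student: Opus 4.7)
The plan is a short contrapositive argument: I will show that if $\dim(\mathcal{M}) = \dim(X) = d$ holds in a neighbourhood of $\x^*$, then the loss gradient $\nabla_\x L(\x^*,\w)$ must vanish, so no gradient-based attack (which, as recalled in Section~\ref{sec:posterior_pred_attack}, performs an update proportional to a function of $\nabla_\x L(\x,\w)|_{\x^*}$) can be effective at $\x^*$.

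The first step is geometric. A smooth embedded submanifold $\mathcal{M} \subseteq X = \mathbb{R}^d$ of maximal dimension $d$ is locally an open subset of the ambient space: from the submanifold charts around $\x^*$ one can extract an open neighbourhood $U \subseteq X$ of $\x^*$ with $U \subseteq \mathcal{M}$. On this neighbourhood the zero-loss hypothesis gives $L(\x,\w) = 0$ for every $\x \in U$. The second step is analytic: since $\phi$ is smooth by assumption, $f(\cdot,\w)$ is smooth on $X$, and on $U$ (where $U \subseteq \mathcal{M}$ and the target is accessible by the standing assumption that any smooth extension of $f^{true}$ lies in $\mathcal{F}$) the loss $L(\cdot,\w) = \ell(f(\cdot,\w), f^{true}(\cdot))$ is a smooth function that vanishes identically. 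A smooth function identically zero on an open set has vanishing gradient at every point of that set, so $\nabla_\x L(\x^*,\w) = 0$, contradicting vulnerability to any gradient-based attack at $\x^*$ and closing the contrapositive.

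I do not foresee any substantive obstacle: the lemma is a geometric observation rather than a deep result. The only mild care needed is to (i) formalise \emph{vulnerability to gradient-based attacks} as requiring $\nabla_\x L(\x^*,\w) \neq 0$, without which procedures such as FGSM (Equation~\eqref{BayesFGSM}) or PGD produce the trivial zero perturbation and cannot alter the prediction; and (ii) note that the conclusion ``$\mathcal{M}$ is locally homeomorphic to a space of dimension smaller than $X$'' is precisely the negation of $\mathcal{M}$ locally containing an open ball of $X$, which in the smooth-submanifold setting is equivalent to $\dim(\mathcal{M}) < d$ near $\x^*$.
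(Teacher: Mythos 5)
Your proof is correct and is essentially the contrapositive of the paper's own argument: both rest on the observation that vulnerability requires $\nabla_\x L(\x^*,\w)\neq 0$, while zero loss on a full-dimensional (hence locally open) $\mathcal{M}$ would force that gradient to vanish. The extra care you take in formalising ``vulnerability'' and in invoking smoothness of $L$ is a welcome tightening, but the underlying reasoning is the same.
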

\begin{proof}
By assumption $\forall \x\in\mathcal{M}, L(\x,\w)=0$, which implies that the gradient of the loss is zero along the data manifold.  However, if $f$ is vulnerable to gradient based attacks at $\x^*$ then the gradient of the loss at $\x^*$ must be non-zero. Hence, there exists an open neighbourhood $\mathcal{B}$ of $\x^*$ such that $\mathcal{B}\not\subseteq\mathcal{M}$, which implies $\mathrm{dim}(\mathcal{M})<\mathrm{dim}\left(X \right)$ locally around $\x^*$. 
\end{proof}
%Notice that the assumption of smoothness of the data manifold avoids
%pathologies in the data distribution (e.g. its support being a closed but dense subset of $\mathbb{R}^d$).
%Additionally, this assumption guarantees that the dimensionality of $\mathcal{M}$ is locally constant. 
Lemma \ref{trivialLemma} confirms the widely held conjecture that adversarial attacks may originate from degeneracies of the data manifold \citep{goodfellow2014explaining,fawzi2018adversarial}. In fact, it has been already empirically noticed \citep{khuory} that adversarial perturbations often arise in directions  normal to the data manifold. % The higher the codimension of the data manifold into the embedding space, the more it is likely to select random directions which are normal to it. 
The suggestion that lower-dimensional data structures might be ubiquitous in NN problems is also corroborated by recent results \citep{goldt2019modelling} showing that the characteristic training dynamics of NNs are intimately linked to data lying on a lower-dimensional manifold. Notice that the implication is only one way; { it is perfectly possible for the data manifold to be low dimensional and still not vulnerable at many points. Consequently, the fact that the data manifold has a smaller dimension than the ambient space is a necessary, but not sufficient, condition for vulnerability to adversarial attacks.} 

%\ap{The first sentence is a repetition of the preliminary and can now be streamlined}
We note that, as discussed in Section \ref{sec:overparameterizeD_Nns}, at convergence of the training algorithm and in the limit of infinitely-many data, infinitely-wide neural networks are guaranteed to achieve zero loss on the data manifold, satisfying the assumption of Lemma \ref{trivialLemma}. As a result, once an infinitely-wide NN is fully trained, for any $\x\in\mathcal{M}$ the gradient of the loss function is orthogonal to the data manifold as it is  zero along the data manifold, i.e., $\nabla_{\x} L(\x,\w)=\nabla^\perp_\x L(\x,\w)$, where  $\nabla^\perp_\x$  denotes the gradient projected into the normal subspace of $\mathcal{M}$ at $\x$. We stress that for a given NN, $\nabla^\perp_\x L(\x,\w)$ is in general non-zero even if the network achieves zero loss on $\mathcal{M}$ (this is formalized in the next subsection), thus explaining the existence of adversarial examples even for very accurate classifiers. Crucially, in Section \ref{sec:adv_robustness_bayesian_averaging} we show that for BNNs, when averaged w.r.t.\ the posterior distribution, the orthogonal gradient vanishes.

\subsection{A Symmetry Property of Neural Networks}
\label{sec:AdvRobBNNs}

%\LL{Improve and increase the discussion}

\noindent
Before considering the BNN case, in Proposition \ref{Proposition:ExtensionLemmaForNNs} below we show a symmetry property of neural networks: given a  neural network, we can always find an infinitely-wide NN that has the same loss but opposite orthogonal gradient. %proving that for any $\x\in $ the orthogonal component of the expected loss gradient vanishes for an infinitely-wide BNN, i.e. under some assumptions $ \langle\nabla^\perp_\x L(\x,\w)\rangle_{p\left(\w\vert D\right)}$ converges to zero. 
In order to prove this result, we first introduce Lemma \ref{extension_lemma}, which is a generalization of the submanifold extension lemma  %(Lemma 5.34 in \cite{leeintroduction})
and a key result we leverage. It proves that any smooth function defined on a submanifold $\mathcal{M}$ can be extended to the ambient space, in such a way that the choice of the derivatives orthogonal to the submanifold is arbitrary. 

%\ap{I don't understand, first it is reported as a result we extend from Lee et al., then it's reported as a result from Anders et al?}
\begin{lemma}[\cite{anders2020fairwashing}]\label{extension_lemma}
Assume that $\mathcal{M}$ is a smooth closed manifold. Let $T_\x\mathcal{M}$ be the tangent space of $\mathcal{M}$ at a point $\x\in\mathcal{M}$. Let $V=\sum_{i=\mathrm{dim}\left(\mathcal{M}\right)+1}^d v^i \partial_i$ be a conservative vector field along $\mathcal{M}$ which assigns a vector in $T_\x\mathcal{M}^\perp$
for each $\x\in\mathcal{M}$. For any smooth function $f^{true}:\mathcal{M}\to\mathbb{R}$ there exists a smooth extension $F:X\to\mathbb{R}$ such that 
$$ F|_\mathcal{M} = f^{true},
$$
where $F|_\mathcal{M}$ denotes the restriction of $F$ to the submanifold $\mathcal{M}$, and such that the derivative of the extension $F$ is 
\begin{align*}
    \nabla_\x F(\x)=(\nabla_1 f^{true}(\x), \ldots, &\nabla_{\mathrm{dim}\left(\mathcal{M}\right)} f^{true}(\x),\\ &v^{\mathrm{dim}\left(\mathcal{M}\right)+1}(\x), \ldots, v^{d}(\x))
\end{align*}
for all $\x\in\mathcal{M}$.
\end{lemma}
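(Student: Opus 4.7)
My plan is to prove this via the tubular neighborhood theorem, which is the standard differential-geometric tool for extending data from an embedded submanifold to the ambient space while controlling the transverse derivatives.

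\textbf{Setup via a tubular neighborhood.} First I would invoke the tubular neighborhood theorem for the smooth closed submanifold $\mathcal{M}\subset X$: there exists an open neighborhood $U\supset\mathcal{M}$ in $X$ and a smooth nearest-point retraction $\pi:U\to\mathcal{M}$ such that for every $\x\in U$ the displacement $\x-\pi(\x)$ lies in $T_{\pi(\x)}\mathcal{M}^{\perp}$. Equivalently, $U$ is diffeomorphic to a neighborhood of the zero section of the normal bundle $N\mathcal{M}$ via $(p,v)\mapsto p+v$. Because $\mathcal{M}$ is closed (compact), such a $\pi$ can be chosen on a uniform tube of positive radius.

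\textbf{Candidate extension on the tube.} On $U$ I would define
$$F_0(\x)\;=\;f^{true}(\pi(\x))\;+\;V(\pi(\x))\cdot\bigl(\x-\pi(\x)\bigr),$$
where the inner product is the Euclidean one in $\mathbb{R}^d$. Smoothness follows from smoothness of $\pi$, $f^{true}$, and $V$. For $\x_0\in\mathcal{M}$ we have $\pi(\x_0)=\x_0$ and $\x_0-\pi(\x_0)=0$, so $F_0|_{\mathcal{M}}=f^{true}$ as required.

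\textbf{Verifying the gradient at points of $\mathcal{M}$.} This is the central computation. Differentiating $F_0$ at $\x_0\in\mathcal{M}$ in a direction $h\in\mathbb{R}^d$, the chain rule gives
$$\nabla F_0(\x_0)\cdot h\;=\;\nabla f^{true}(\x_0)\cdot(D\pi(\x_0)h)\;+\;V(\x_0)\cdot\bigl(h-D\pi(\x_0)h\bigr),$$
where the term involving $DV\cdot(\x-\pi(\x))$ is killed because $\x_0-\pi(\x_0)=0$. The key geometric fact is that $D\pi(\x_0):\mathbb{R}^d\to T_{\x_0}\mathcal{M}$ coincides with the orthogonal projection onto the tangent space. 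Splitting $h=h^{\parallel}+h^{\perp}$ accordingly, the first summand contributes only the tangential gradient of $f^{true}$ (the first $\mathrm{dim}(\mathcal{M})$ coordinates in an adapted frame) and the second summand contributes $V(\x_0)\cdot h^{\perp}$, which gives precisely the prescribed normal components $(v^{\mathrm{dim}(\mathcal{M})+1},\ldots,v^d)$ at $\x_0$.

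\textbf{Global extension.} Since $F_0$ is only defined on $U$, I would pick a smooth bump function $\chi:X\to[0,1]$ that is identically $1$ on a smaller tubular neighborhood $U'\Subset U$ of $\mathcal{M}$ and vanishes outside $U$, and set $F=\chi F_0$ on $U$, extended by zero to $X$. On $\mathcal{M}$ we have $\chi\equiv 1$ with all first derivatives of $\chi$ vanishing in a neighborhood, so the restriction $F|_{\mathcal{M}}=f^{true}$ and $\nabla F(\x_0)=\nabla F_0(\x_0)$ computed above are unchanged.

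\textbf{Main obstacle.} The hardest part is the gradient identification in the third step: one must check that the auxiliary term $V(\pi(\x))\cdot(\x-\pi(\x))$ does not pollute the tangential component of $\nabla F_0$ at $\mathcal{M}$. The vanishing of $\x-\pi(\x)$ on $\mathcal{M}$ precisely cancels the stray $DV\cdot D\pi$ terms, and the orthogonal-projection character of $D\pi$ on $\mathcal{M}$ ensures the tangential/normal split is clean. The conservativeness hypothesis on $V$ plays its natural role here by guaranteeing that the prescribed normal field is compatible with being an actual gradient, so the constructed $F$ is an honest global smooth function rather than merely a jet on $\mathcal{M}$.
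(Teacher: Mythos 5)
Your construction is correct, and it is worth noting that the paper itself gives no proof of this lemma --- it is imported wholesale from \cite{anders2020fairwashing} (itself a mild generalization of the submanifold extension lemma in \cite{leeintroduction}), so there is no in-paper argument to compare against. The standard route behind the cited result is local: in each slice chart one writes $F(u)=f^{true}(u^1,\dots,u^{\mathrm{dim}(\mathcal{M})},0,\dots,0)+\sum_{i>\mathrm{dim}(\mathcal{M})}v^i u^i$ and patches the local extensions with a partition of unity, checking that the patching does not disturb the first-order jet along $\mathcal{M}$ because all local pieces agree there to first order. Your tubular-neighborhood version replaces the chart-by-chart bookkeeping with a single global formula $F_0=f^{true}\circ\pi+V(\pi(\cdot))\cdot(\mathrm{id}-\pi)$, and the computation you flag as the main obstacle is handled correctly: the $DV$ term dies because $\x-\pi(\x)$ vanishes on $\mathcal{M}$, and $D\pi$ at a point of $\mathcal{M}$ is indeed the orthogonal projection onto the tangent space, which delivers the clean tangential/normal split of the gradient. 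One small inaccuracy in your closing remark: the conservativeness of $V$ plays no role in your construction (nor is it needed for the conclusion as stated) --- your argument in fact proves the extension exists for an arbitrary smooth normal field, which is a slightly stronger statement than the one quoted; this is harmless for the way the lemma is used in Proposition \ref{Proposition:ExtensionLemmaForNNs} and Theorem \ref{thm:main}, where $V$ is taken constant equal to $\pm\mathbf{v}$ at the point of interest.
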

\noindent
Notice that in Lemma \ref{extension_lemma}, in $\nabla_\x F(\x)$, we pick the local coordinates at $\x\in \mathcal{M}$, such that the first set of components parametrises the data manifold. We stress that, as $\mathcal{M}$ is smooth, this is without any loss of generality \citep{leeintroduction}. 
Lemma \ref{extension_lemma}, together with the universal approximation capabilities of NNs \citep{hornik1991approximation}, is employed in Proposition \ref{Proposition:ExtensionLemmaForNNs} to show that for any possible value $\mathbf{v}$ of the orthogonal gradient to the data manifold in a point,  there exists at least two (possibly not unique) different weight vectors that achieve zero loss and have orthogonal gradients respectively equal to $\mathbf{v}$ and $-\mathbf{v}$.

\begin{proposition}\label{Proposition:ExtensionLemmaForNNs}
 Consider an infinitely-wide NN $f$ with  smooth, bounded, and non-constant activation functions and an input $\x\in \mathcal{M}$, where $\mathcal{M}$ is a smooth closed manifold.  Then, for any smooth function ${f}^{true}:\mathcal{M} \to \mathbb{R}$ and vector $\mathbf{v}\in \mathbb{R}^{\mathrm{dim}\left(X \right)-\mathrm{dim}\left(\mathcal{M}\right)}$, there exist  $\w_1, \w_2$
 such that
 \begin{align}
 f(\cdot,\w_1) |_{\mathcal{M}} = {f}^{true}  =f(\cdot,\w_2) |_{\mathcal{M}} \label{eq:restriction}\\
\nabla_\x^\perp f(\x,\w_1)=\mathbf{v}=-\nabla_\x^\perp f(\x,\w_2).\label{eq:opposite_grads}
\end{align}
%[scrivilo for any smooth functions]
%For a given measure $\mu,$ e.g. Lebesque measure, let $L(\w)=\int_{} L(\x,\w) d\mu(\x), $ be the integral of the loss over  $$. Assume that for any $\w, $ the supremum of $L(\w)$ is $M\leq \infty$ is equal to some constant $M<\infty$, i.e., $\sup_{\w}L(\w)= M$. Consider a NN $f$ of size $N$ and an input $\x\in $. Then, in the limit of $N\to \infty,$ for any $0\leq l < M$ and $\textbf{v}\in \mathbb{R}^{d-\text{dim}(D)}$ there exist weight set $\w_1, \w_2$ achieving loss $l$ and such that
%$$\nabla_\x^\perp L(\x,\w_1)=\mathbf{v}=-\nabla_\x^\perp L(\x,\w_2),$$
%that is, both $\w_1$ and $\w_2$ have same and have opposite orthogonal gradients in $\x$.
\end{proposition}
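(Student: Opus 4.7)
The plan is to combine Lemma \ref{extension_lemma} (the submanifold extension lemma) with the universal representation assumption stated after Equation \eqref{eq:universal_theorem}: any smooth $g : X \to \mathbb{R}$ with $g|_{\mathcal{M}} = f^{true}$ lies in $\mathcal{F}$. The argument then reduces to producing two smooth ambient extensions $F_1, F_2$ of $f^{true}$ whose orthogonal gradients at $\x$ equal $\mathbf{v}$ and $-\mathbf{v}$, and reading off the corresponding weights $\w_1, \w_2$ via the limit in Definition \ref{def:infiniteNN}.

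To produce these extensions, I first build a smooth conservative normal-bundle-valued vector field $V_1$ on $\mathcal{M}$ with $V_1(\x) = \mathbf{v}$. Pick a tubular-neighbourhood coordinate chart $(y^1,\dots,y^d)$ around $\x$ in $X$ such that $(y^1,\dots,y^{\dim\mathcal{M}})$ parameterise $\mathcal{M}$ and the remaining coordinates span the normal directions. Setting $h(y) = \rho(y)\sum_{i=\dim\mathcal{M}+1}^{d} v^{i} y^{i}$, with $\rho$ a smooth bump function equal to $1$ in a neighbourhood of $\x$ and compactly supported in the chart, gives a smooth ambient function whose gradient at $\x$ coincides with $\mathbf{v}$ in the normal directions. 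Taking $V_1$ to be the pointwise orthogonal projection of $\nabla h|_{\mathcal{M}}$ onto $T\mathcal{M}^{\perp}$ produces a conservative section of the normal bundle with $V_1(\x) = \mathbf{v}$. Set $V_2 := -V_1$, which is also conservative and satisfies $V_2(\x) = -\mathbf{v}$.

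Applying Lemma \ref{extension_lemma} to each $V_i$ produces smooth functions $F_1, F_2 : X \to \mathbb{R}$ with $F_i|_{\mathcal{M}} = f^{true}$ and $\nabla^{\perp}_{\x} F_i(\x) = V_i(\x)$, hence $\nabla^{\perp}_{\x} F_1(\x) = \mathbf{v} = -\nabla^{\perp}_{\x} F_2(\x)$. The representability assumption places both $F_i$ in $\mathcal{F}$, and Definition \ref{def:infiniteNN} then supplies weight vectors $\w_1, \w_2$ (in the limit $n_M \to \infty$) with $f(\cdot, \w_i) = F_i$. Equations \eqref{eq:restriction} and \eqref{eq:opposite_grads} follow immediately.

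The main obstacle is producing a conservative normal-valued vector field taking a prescribed value at one point: a constant extension fails because the normal bundle of $\mathcal{M}$ can twist, and a generic smooth normal section need not be conservative. The bump-function-times-linear-form construction sidesteps this obstruction by realising $V_1$ as the normal component of an honest ambient gradient, which is automatically conservative. Beyond this, the proof is essentially a direct composition of the extension lemma with the universal-representation property already assumed in Section \ref{sec:overparameterizeD_Nns}.
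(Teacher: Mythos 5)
Your proof is correct in outline and shares the paper's skeleton (Lemma \ref{extension_lemma} to build two ambient extensions $F_1,F_2$ of $f^{true}$ with opposite orthogonal gradients at $\x$, followed by a representability argument for infinitely-wide networks), but it closes the argument differently, and the difference matters. You invoke the exact-representability assumption stated after Equation \eqref{eq:universal_theorem} — that every smooth $g$ with $g|_{\mathcal{M}}=f^{true}$ lies in $\mathcal{F}$ — and then read off $\w_1,\w_2$ from Definition \ref{def:infiniteNN}. The paper instead applies Theorem 3 of \citet{hornik1991approximation}, which gives uniform $1$-density on compacts in $\mathcal{C}^1(X)$, i.e.\ simultaneous approximation of $F_i$ \emph{and} $\nabla F_i$ by finite-width networks. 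This is the step your route glosses over: membership in $\mathcal{F}$ per Definition \ref{def:infiniteNN} is defined via pointwise convergence of function values, which does not by itself control the gradients of the approximating networks, so the claim that Equation \eqref{eq:opposite_grads} ``follows immediately'' once $f(\cdot,\w_i)=F_i$ conflates the gradient of the limit function with the limiting behaviour of the networks' gradients. Your argument can be repaired either by interpreting $\nabla_\x^\perp f(\x,\w_i)$ purely as the gradient of the limit object, or, as the paper does, by upgrading the density statement to $\mathcal{C}^1$. On the other hand, your explicit construction of a conservative normal field $V_1$ with $V_1(\x)=\mathbf{v}$ (bump function times a linear form in normal coordinates, then projecting the ambient gradient onto $T\mathcal{M}^\perp$) supplies a detail the paper leaves implicit — Lemma \ref{extension_lemma} requires a \emph{conservative} field, and verifying that one with the prescribed value at $\x$ exists is a genuine, if small, gap in the paper's own exposition that your version fills.
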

\begin{proof}
From Lemma \ref{extension_lemma} we know that there exist smooth extensions $F^+$ and $F^-$ of $f^{true}$ 
to the embedding space such that 
$\nabla_\x^\perp F^+(\x) =\mathbf{v}=-\nabla_\x^\perp F^-(\x)$.
As a consequence, to conclude the proof it suffices to apply Theorem 3 in \citep{hornik1991approximation} that guarantees that infinitely-wide neural networks are \emph{uniformly 1-dense} on compacts in $\mathcal{C}^1(X)$, under the assumptions of smooth, bounded, and non-constant activation functions. 
Specifically, for any $F\in \mathcal{C}^1(X)$ and $\epsilon>0$, for any compact $X'\subseteq X$ there exists a set of weights $\w$ s.t.
\begin{align*}
    \max\Big\{ \sup_{\x\in X'}&|| F(\x)-f(\x,\w)||_{\infty},\\ &\sup_{\x\in X'}|| \nabla F(\x) - \nabla f(\x,\w) ||_{\infty} \Big\} \leq \epsilon.
\end{align*}
As $F^+,F^- \in \mathcal{C}^1(X)$, this concludes the proof.

%As $\bar{f}$ is smooth, Lemma \ref{extension_lemma} guarantees that it can be extended on the ambient space to smooth functions ${F^+}$ and $F^{-}$ that are equal to $\bar{f}$ on $\mathcal{M}$ and have gradients in the direction orthogonal to $\mathcal{M}$ in $\x$ equal to $\mathbf{v}$ and $-\mathbf{v}$. Finally, to conclude we employ Proposition \ref{Proposition:ExtensionLemmaForNNs} to guarantee uniform convergence of both functions and their gradients for $L\to\infty.$
\end{proof}
\noindent
Note that by the chain rule, the gradient of the loss is proportional to the gradient of the NN. As a consequence, 
Proposition \ref{Proposition:ExtensionLemmaForNNs} guarantees that,
for infinitely-wide NNs, for any weights set achieving the minimum loss, %\ap{I think not just any loss, I think it needs to be the minimum loss.}
 there exists another weights set with same loss and opposite orthogonal gradient of the loss w.r.t.\ the input. This has various implications: i) deterministic NNs trained on a data manifold could exhibit arbitrarily large gradients in directions orthogonal to the data manifold, even when they learn the latent function perfectly, ii) in a Bayesian framework, by averaging over weights sets that have opposite orthogonal gradients, one could achieve a robust model that has vanishing expected orthogonal gradient. In the next Section, in Theorems \ref{Th:GaussianLimitCancellationGaussian} and \ref{th:MainTheoremClassification} we show that, under some relatively mild assumptions, this is indeed the case. However, we should already emphasize  that such a result does not hold by simply averaging the set of weights w.r.t.\ any distribution. Intuitively, for this result to hold, it is required that each set of weights achieving a given gradient value has the same measure as the set of weights with the same loss and opposite orthogonal gradient value.

%{ LL: Discuss somewhere that if loss is zer oalmost surely for all samples, then the gradient of the loss in the manifiold will be zero everywhere almost surely (by chain rule). So, not interesting. but what is interesting is if only few samples have high derivative! So, can have a theorem also on this. Like assume that gradient of the loss is bounded almost surely by B and that a sample has zero loss with prob greater than x. Questo gia mi mitiga. Poi fai vedere quello con la media della Gaussiana. Prima fa vedere che se convergi in distributzione a una posterior molto accurata expected gradient of the loss will be small (usa convergenza in distribuzione). Poi usa cose Gaussiane per far vedere come la variabilita nella prior aiuta}

\section{Adversarial Robustness via Bayesian Averaging}
\label{sec:adv_robustness_bayesian_averaging}
%\LL{Need to assume somewhere that the true underlying function is in the RKHS of the NN. (similar than in Rotskoff)}
%\LL{Need to connect better with the story and avoid to introduce things twice. For instance, definition of BNNs and uninformative prior should be in background section}
%\LL{Need to be formal in here and in how we define everything.}
%\LL{Equations for the BNN and GP kernel are commented in here. They should be introduced in Section 2}

{

In order to prove that BNNs have vanishing orthogonal gradients, we start from a general Gaussian process { (GP)} \cite{liu2020gaussian} trained with a given dataset. We show that, under the assumption that the data manifold is a linear subspace of the ambient space, the projection of the expected gradient of the GP in a direction orthogonal to the data manifold vanishes for all points in the data manifold. This shows that GPs are able to obtain perfect cancellation of the orthogonal gradients.
By relying on the convergence of BNNs to GPs we then extend this result to infinitely-wide  BNNs. 
%\ap{Mi pare che questo non lo spieghi al momento, pero' sarebbe cool da fare vedere esplicitamente.}\LL{Concordo}

We first state our results for a regression setting, the classification setting will then be considered in Section \ref{Sec:Classification}. { In particular, in Theorem \ref{Th:GaussianLimitCancellationGaussian} below we show that, for a wide class of covariance functions, a GP trained on a regression problem has zero expected orthogonal gradients. In   Corollary \ref{Corollary:GaussianLimitCancellation} we then extend this result to BNNs. %The classification case will be considered in Section \ref{Sec:Classification}. }
%In order to ensure generality we present Theorem \ref{Th:GaussianLimitCancellationGaussian} below for a general Gaussian process (GP) trained with a finite dataset of arbitrary size. Then, by relying on the convergence between BNNs and GPs, in Corollary \ref{Corollary:GaussianLimitCancellation}, we show that Theorem \ref{Th:GaussianLimitCancellationGaussian} also applies to  infinitely-wide BNNs.

\begin{theorem}
\label{Th:GaussianLimitCancellationGaussian}
Let $z(\x)$ be a zero-mean Gaussian process with covariance function $K:\mathbb{R}^d \times \mathbb{R}^d \to \mathbb{R}.$ Call $z(\x)\mid D_N$ the posterior GP obtained by training $z$ on a regression problem with data set $D_N=\{ (\x_i,\mathbf{y}_i) \; |\; i=1,\ldots,N \}$ and additive i.i.d. zero mean Gaussian observation noise of variance $\sigma^2\geq 0$.
 Assume that:
    \begin{itemize}
 \item For $\x',\x'' \in \mathcal{M},$ $K(\x',\x'')=g(\sum_{i=1}^d(x_i'-x_i'')^l),$ $l>1$ or $K(\x',\x'')=g(\sum_{i=1}^dx_i' x_i''),$ where $g$ is any twice differentiable function {such that $K$ is a valid kernel}.   
 \item $\mathcal{M}$ is a linear subspace of $X\subseteq \mathbb{R}^d$.
    \end{itemize}
Then, for any $\x\in \mathcal{M}$ it holds that 
    \begin{equation}
       \langle\nabla_\x^\perp z(\x)\rangle_{p( z(\x)\mid D_N)} = 0.
    \label{eq:expected_gradient_GP}
    \end{equation}
\end{theorem}
\begin{proof}
Conditional distributions of jointly Gaussian random variables are still Gaussian \citep{rasmussen2006gaussian}. Consequently,  $ z\mid D_N$ is a Gaussian process with mean at $\x$ given by
\begin{align}
\label{Eqn:PosteriorGPRegression}
\langle z(\x)\rangle_{p( z(\x)\mid D_N)}=K(\x,X)\big( K(X,X) + \sigma^2I \big)^{-1}[\mathbf{y}_1,...,\mathbf{y}_N]^T,   
\end{align}
where
$$K(\x,X)=[K(\x,\x_1),...,K(\x,\x_N)],$$
$$  K(X,X)=\begin{bmatrix}
    K(\x_1,\x_1) &  \dots  & K(\x_1,\x_N) \\
    \vdots &  \ddots & \vdots \\
    K(\x_N,\x_1)  &  \dots  & K(\x_N,\x_N) 
\end{bmatrix}    ,    $$
{ and $I$ is the identity matrix of size equal to $K(X,X).$}
%Results in  \citep{hron2020exact} guarantee that the BNN posterior converges weakly to the posterior induced by the GP
%limit of the prior \citep{hron2020exact}. Consequently, 
%being the operator $$\mathcal{G}_\x:f\mapsto\nabla_\x^\perp f(\x),$$ linear, thus preserving weak convergence
%Then, by the Portmanteau theorem, $\mathcal{G}_\x$ preserves weak convergence, hence
%\begin{equation}
%       \langle\nabla_\x^\perp f(\x,\w)\rangle_{p(f(\cdot,\w)\mid {D_N)}}\to \langle\nabla_\x^\perp f(\x)\rangle_{p(f \mid {D_{N})}}.
%    \end{equation}
%, to prove Theorem \ref{thm:main} it is enough to show %that
%$$ \langle\nabla_\x^\perp f(\x)\rangle_{p(f_{GP} \mid {D_{N})}} = 0.$$ 
%Note that, as exact BNN posterior
%converges weakly to the one induced by the GP
%limit of the prior \citep{hron2020exact}, we can assume that 
As the derivative of a Gaussian process is still a Gaussian process with mean given by the derivative of its mean \citep{rasmussen2006gaussian}, by the linearity of the expectation  and the definition of vector projection, we obtain that 
\begin{align}
\label{eqn:MeanDerivativeOrthogonal}
&\langle\nabla_\x^\perp z(\x)\rangle_{p( z(\x)\mid D_N)}=\\
&\quad  \big(\sum_{i=1}^d {v_i} \frac{\partial K(\x,X)  }{\partial x_i } \big)  \big( K(X,X) + \sigma^2I \big)^{-1}[\mathbf{y}_1,...,\mathbf{y}_N]^T \cdot \mathbf{v},
\end{align}
where $\mathbf{v}=(v_1,...,v_d)$ is a unit vector orthogonal to $\mathcal{M}$ in $\x$, which always exists if $\mathrm{dim}(\mathcal{M})<\mathrm{dim}\left(X \right).$
Hence, if we can show that for any $\x' \in \mathcal{M}$, $\sum_{i=1}^d {v_i} \frac{\partial K(\x,\x')  }{\partial x_i } =0$, then the proof is concluded. In order to do that, we notice that, as $\mathcal{M}$ is a subspace, the orthogonal direction $\mathbf{v}$ is a vector of the orthogonal complement of $\mathcal{M}$. Consequently, it holds that
$$ \forall \x \in \mathcal{M}, \quad \sum_{i=1}^d {v}_i x_i =0. $$
From this, for the case $K(\x,\x')=g(\sum_{i=1}^d x_i x_i')$,  we obtain that for any $\x' \in \mathcal{M}$ it holds that
\begin{align*}
%&\big( \nabla_\x^\perp K(\x,\x') \big)=\\
&\sum_{i=1}^d {v_i} \frac{\partial K(\x,\x')  }{\partial x_i } =\\
& \sum_{i=1}^d {v}_i g'(\sum_{j=1}^d x_j x_j')x_i' = \\
&g'(\sum_{i=1}^d x_i x_i')\big(  \sum_{i=1}^d {v}_i x_i' \big)  = 0. 
\end{align*}
The case $K(\x,\x')=g(\sum_{i=1}^d(x_i-x_i')^l)$ follows similarly using the fact that subspaces are closed under linear combination.
\end{proof}

\begin{corollary}
\label{Corollary:GaussianLimitCancellation}
Let $f(\x,\w)$ be a BNN trained on a regression problem with data set $D_N=\{ (\x_i,\mathbf{y}_i) \; |\; i=1,\ldots,N \}$ and additive { i.i.d.} zero mean Gaussian observation noise of variance  $\sigma^2>0$.
Assume that:
    \begin{itemize}
        \item To each weight and bias are associated independent normal priors.
        \item $\mathcal{M}$ is a linear subspace of $X\subseteq \mathbb{R}^d$.
        \end{itemize}
Then, for an infinitely-wide BNNs  it holds that
    \begin{equation}
       \langle\nabla_\x^\perp f(\x,\w)\rangle_{p(f(\x,\w)\mid{D_N)}}= 0.
    \label{eq:expected_gradient}
    \end{equation}

\end{corollary}
\begin{proof}
%\LL{NEED TO DOUBLE CHECK THE KERNEL FOR SIGMOID. I AM LEAVING THIS TO YOU ANDREA}
In the limit of infinitely-wide architecture, thanks to Proposition \ref{prop:PriorsWeighttoFunction}, $f(\x,\w)$ converges weakly to a GP. However, in general, the resulting kernel will not directly match the kernels
satisfying the assumption of Theorem \ref{Th:GaussianLimitCancellationGaussian}. 
Rather, in the case of BNNs with fully connected architectures, it will be of the form \citep{matthews2018gaussian}\footnote{Note that the result also holds for convolutional BNNs where for the convolutional kernel instead, each monomial could be weighted differently depending on the number of filters, size and stride of the kernel.}
$$ K(\x,\x')=g(\x \x,\x \x',\x' \x'), $$ where $\x \x'$ is the dot product between vectors $\x$ and $\x'$.
However, by the chain rule for multivariable functions, we get that
\begin{align}
\nonumber
&\frac{\partial K(\x,\x')}{\partial x_i}= \\
& \hspace{1cm} 2x_i D_1 [g(\x \x,\x \x',\x' \x')] +x_i' D_2 [g(\x \x,\x \x',\x' \x')],
\label{Eqn:MultiLayerCase}
\end{align}
where $D_i [g]$ indicates the partial derivative of function $g$ w.r.t.\ argument $i$. Consequently, being both $\x$ and $\x'$ in the data manifold $\mathcal{M}$, the derivations in the proof of Theorem \ref{Th:GaussianLimitCancellationGaussian} apply to each of the argument of Eqn \eqref{Eqn:MultiLayerCase}.
To conclude the proof it is enough to notice that Proposition 1 in  \citep{hron2020exact} guarantees that the BNN posterior converges weakly to the posterior induced by the GP
limit of the prior. Consequently, 
being $$\mathcal{G}_\x:f\mapsto\nabla_\x^\perp f(\x,\w),$$ a linear operator \citep{papoulis2002probability} and bounded by assumption, we have that 
$$ \langle\nabla_\x^\perp f(\x{ ,\w})\rangle_{p(f(\x,\w)\mid{D_N)}}\to \langle\nabla_\x^\perp z(\x)\rangle_{p( z(\x)\mid D_N)}=0, $$
where $\langle\nabla_\x^\perp z(\x)\rangle_{p( z(\x)\mid D_N)}$ is as defined in Theorem \ref{Th:GaussianLimitCancellationGaussian}.
\end{proof}

Note that the assumptions in Theorem \ref{Th:GaussianLimitCancellationGaussian} on $K$ are relatively mild. In fact, not only they include the kernels given by the limiting GP of most common BNN architectures including also convolutional neural networks \citep{novak2018bayesian}, but they are also satisfied by most kernels used in practice for learning with GPs. Consequently, our results also apply to recent approaches that encode informative functional priors for BNNs as Gaussian processes \citep{tran2022all}. Note also that an implicit assumption of Theorem \ref{Th:GaussianLimitCancellationGaussian} and Corollary \ref{Corollary:GaussianLimitCancellation} is that the partial derivatives at $\x$ are well defined. In the case of ReLU activation functions, this may not be always the case. In these cases, the results can be equivalently stated by considering subderivatives. 

We should also stress that Theorem \ref{Th:GaussianLimitCancellationGaussian} and Corollary \ref{Corollary:GaussianLimitCancellation} hold for any size of $D_N$. Of course, in this general setting, the projection of the expected gradient of the loss to a direction parallel to $\mathcal{M}$ may not be zero, but will only point to directions where the loss increases within the data manifold. Consequently, if a BNN has small loss everywhere in the data manifold, then necessarily $\langle\nabla_\x^\perp f(\x)\rangle_{p(f(\x,\w)\mid{D_N)}}$ will vanish for any $\x \in \mathcal{M}$.

\begin{remark}
\label{Remak:Variance}
In Theorem \ref{Th:GaussianLimitCancellationGaussian} we only consider the expectation of the orthogonal gradient. In fact, we remark that Theorem \ref{Th:GaussianLimitCancellationGaussian} does not imply that also the variance of the orthogonal projection of the GP vanishes. In particular, in general, the variance of the orthogonal gradient is non-zero, showing how the robustness properties occur only in expectation.
{  This will be empirically investigated in Section \ref{sec:ExperimentalGPtoBNN}.}

\iffalse
following the same steps in the proof of Theorem \ref{Th:GaussianLimitCancellationGaussian} and according to the same assumptions, one can also find that the variance of the posterior GP in the orthogonal direction $\mathbf{v}=(v_1,...,v_d)$ is as follows. First, we notice that the partial derivatives at $\x$ wrt to $x_i$ and $x_j$ for $i,j \in \{1,...,d \}$ are jointly Gaussian with covariance matrix 
\begin{align*}
    \frac{\partial^2 K(\x',\x'')}{\partial x_i' \partial x_j''} - \frac{\partial K(\x',X)}{\partial x_i'}\big( K(X,X) + \sigma^2I \big)^{-1}\frac{\partial K(\x'',X)}{\partial x_j''}^T.
\end{align*}
Consequently, we get that $ \nabla_\x^\perp z(\x)\mid D_N) $ is still Gaussian distributed (it is linear combination of the partial derivatives of $z(\x)\mid D_N$ weighted by coefficients in $\mathbf{v}$, which are jointly Gaussian). The mean is as in Eqn \eqref{eqn:MeanDerivativeOrthogonal}, while the variance is in general non-zero for the kernels considered in Theorem \ref{Th:GaussianLimitCancellationGaussian}. For
%Conseqently,  and in the case $d+2$ the variance is as follows (the more general case follows similarly)
%\begin{align*}
%    &\mathbf{V}[\nabla_\x^\perp z(\x)]_{p( z(\x)\mid D_N)}= \\
%    &\quad v_1^2\frac{\partial^2 K(\x,\x)}{\partial x_1 \partial x_1}+v_2^2\frac{\partial^2 K(\x,\x)}{\partial x_2 \partial x_2}+2v_1v_2 \frac{\partial^2 K(\x,\x)}{\partial x_1 \partial x_2},
%\end{align*}
%which in general is not zero for the kernels considered in Theorem \ref{Th:GaussianLimitCancellationGaussian}. 
instance, for the case $K(\x',\x'')=g(\sum_{i=1}^d(x_i-x_i')^l)$, applying the above formula we get
\begin{align*}
    &\mathbf{V}[\nabla_\x^\perp z(\x)]_{p( z(\x)\mid D_N)}= \sum_{i=1}^d v_i^2 \big(  \big)   .
\end{align*}
This illustrates how the orthogonal gradient only vanishes in expectation, while individual samples have almost surely non-zero orthogonal gradients.
[DOUBLE CHECK THIS MORE CAREFULLY]
\fi
\end{remark}

\begin{remark}
The assumption that $\mathcal{M}$ is a linear subspace is needed to guarantee that all training points are orthogonal to the same vector defining the orthogonal direction in $\x$. This guarantees perfect cancellation. However, in the more general case, where $\mathcal{M}$ is not a linear subspace, it is reasonable to expect that Theorem \ref{Th:GaussianLimitCancellationGaussian} and Corollary \ref{Corollary:GaussianLimitCancellation} can still often approximately hold. In fact, for many kernels only training points close to the test point $\x$ will influence the prediction at $\x$. Consequently, for the cancellation in Theorem \ref{Th:GaussianLimitCancellationGaussian} to hold it would be enough that the orthogonal direction in $\x$ remains approximately orthogonal only in a neighbourhood of $\x;$ thus extending our result to more general $\mathcal{M}.$ 
\end{remark}

\subsection{Extension to Classification Setting}
\label{Sec:Classification}

Theorem \ref{Th:GaussianLimitCancellationGaussian} is stated for regression, which is a particularly favourable setting for analysis because of the existence of closed form solutions for the GP posterior. Fortunately, in Theorem \ref{th:MainTheoremClassification} and Corollary \ref{corollary:ClassificationCase} below, we show that our results also extend to classification.

\begin{theorem}
\label{th:MainTheoremClassification}
    Let $z(\x)$ be a zero-mean Gaussian process (GP) with covariance function $K:\mathbb{R}^d \times \mathbb{R}^d \to \mathbb{R}.$ Call $z(\x)\mid D_N$ the posterior GP obtained by training $z$ on a classification problem with data set $D_N=\{ (\x_i,\mathbf{y}_i) \; |\; i=1,\ldots,N, y_{i}\in\{0,1\} \}$. 
Then, under the same assumptions of Theorem \ref{Th:GaussianLimitCancellationGaussian}, for any $\x\in \mathcal{M}$ it holds that
    \begin{equation}
       \langle\nabla_\x^\perp z(\x)\rangle_{p( z(\x)\mid D_N)} = 0.
    \label{eq:expected_gradient_GP}
    \end{equation}
\end{theorem}
\begin{proof}
Because of the non-Gaussianity of the likelihood,
$z(\x)\mid D_N$ is not Gaussian in general. However, it is still possible to show that (see e.g., Eqn 3.22 in \citep{rasmussen2006gaussian})
\begin{align}
\label{Eqn:PosteriorCancellationClassification}
    \langle z(\x)\rangle_{p( z(\x)\mid D_N)} = K(\x,X) K(X,X)^{-1} g(D_N),
\end{align} 
where $g$ is a function given by the expectation of the latent function given the training data.  Consequently, as $g$ is independent of $\x$, under boundeness assumption of $K(X,X)^{-1} g(D_N)$, the same approach considered in the proof of Theorem \ref{Th:GaussianLimitCancellationGaussian} following Eqn \eqref{Eqn:PosteriorGPRegression}
holds also in this setting.
\end{proof}

\begin{corollary}
\label{corollary:ClassificationCase}
Let $f(\x,\w)$ be a BNN trained on a classification problem with data set $D_N=\{ (\x_i,\mathbf{y}_i) \; |\; i=1,\ldots,N, y_{i}\in\{0,1\} \}$. 
Then, under the same assumptions of Theorem \ref{Th:GaussianLimitCancellationGaussian}, for any $\x\in \mathcal{M}$ it holds that
$$ \langle\nabla_\x^\perp f(\x,\w)\rangle_{p(f(\x,\w)\mid{D_N)}}\to 0. $$
\end{corollary}
\begin{proof}
The proof is analogous to that of Corollary \ref{Corollary:GaussianLimitCancellation} by noticing that in the infinitely-width limit (trained) BNNs converge to GPs also in the classification setting \citep{hron2020exact}. 
\end{proof}

}

\section{Consequences and Limitations of our Results}
\label{sec:limitations}

The results in the previous section have the natural consequence of protecting BNNs against gradient-based attacks, due to the vanishing average of the expectation of the gradients in the limit. Its proof also sheds light on a number of observations made in recent years. Before moving on to empirically validating the theorem in the finite-width case, it is worth reflecting on some of its implications and limitations:
\begin{itemize}
    \itemsep0em 
%     \item \ap{Is this needed anymore?} [CAncella] Theorem \ref{Th:GaussianLimitCancellationGaussian} and \ref{th:MainTheoremClassification} are proved under the assumption of Gaussian priors  needed to require perfect gradient cancellation; in practice, unless the priors are too informative with restrictive support, we do not expect a major deviation from the idealised case. This is also corroborated by the fact that in the limit of large data set the posterior is less influenced by the choice of the prior \citep{vandervaart}.  
    \item Theorem \ref{Th:GaussianLimitCancellationGaussian}  holds in a specific thermodynamic limit, however it is reasonable to expect that the averaging effect of BNN gradients can still often provide considerable protection in conditions when the network architecture leads to high accuracy and strong expressivity.  %An empirical confirmation of this result is shown in Figure~\ref{fig:half_moons_hmc}, where we examine the impact of the assumptions made in Theorem~\ref{Th:GaussianLimitCancellationGaussian}: by exploiting a setting in which we have access to the data-generating distribution, the half-moons dataset \citep{halfmoons} (details on training hyper-parameters used can be found in Table \ref{table:half_moons_hmc}).
   % We show that the magnitude of the expectation of each component of the gradient generally shrinks as we increase the number of network's parameters and the number of training points.  {Notice that the trend is non-trivial as there is of course an interaction between the number of neurons per layer and the number of training points required to obtain an accurate posterior estimation (e.g., for 5000 training inputs when jumping from 256 to 512 neurons).} 
    \item Our results holds when the { prediction is obtained by averaging w.r.t.\ the true posterior. Unfortunately, for neural networks it is generally unfeasible to obtain the true posterior, and cheaper variational approximations are commonly employed \citep{blundell2015weight}. In practice, depending on the quality of the approximation, these can still provide protection from adversarial attacks. }
    \item {   Theorem \ref{Th:GaussianLimitCancellationGaussian} and \ref{th:MainTheoremClassification} are stated for GPs. Consequently, these theorems not only provides theoretical backing to recent empirical observations of the adversarial robustness of GPs \citep{cardelli2019robustness,grosse2021killing,patane2022adversarial}, but also generalizes to all processes that  converge to GPs thanks to the  Central Limit Theorem.}
    \item While the Bayesian posterior ensemble may not be the only randomization to provide protection, it is clear that some simpler randomizations such as bootstrap will be ineffective, as noted empirically by \cite{bekasov2018bayesian}. This is because bootstrap resampling introduces variability along the data manifold, rather than in orthogonal directions. 
    In this sense, bootstrap clearly cannot be considered a Bayesian approximation, especially when the data distribution has zero measure support w.r.t.\ the ambient space.
    %In this sense, the often repeated mantra that bootstrap is an approximation to Bayesian inference is strikingly inaccurate when the data distribution has zero measure support.
  %  Similarly, we do not expect gradient smoothing approaches to be successful \citep{athalye2018obfuscated},
   % since the type of smoothing performed by Bayesian inference is specifically informed by the geometry of the data manifold.
%    \item Equation \ref{eq:zero_loss} in Theorem \ref{thm:main} guarantees zero loss for any loss function $\ell(f(\x,\w),f^{true}(\x))$ (e.g. square loss or cross-entropy loss).
    \item Our results only guarantees protection against gradient-based adversarial attacks. As a consequence, it is not clear if the robustness properties of BNNs also extend to non-gradient based attacks. Empirical results in Section \ref{sec:non_gradient_attacks} also suggest that the vanishing gradient properties of BNNs may provide an increased robustness against specific gradient-free attacks.
\end{itemize}

\section{Empirical Results}
\label{sec:empirical_results}

\label{sec:ExpRobustnessAccuracyTradeoff}
\begin{figure*}[ht]
    \centering
    \includegraphics[width=1.\linewidth]{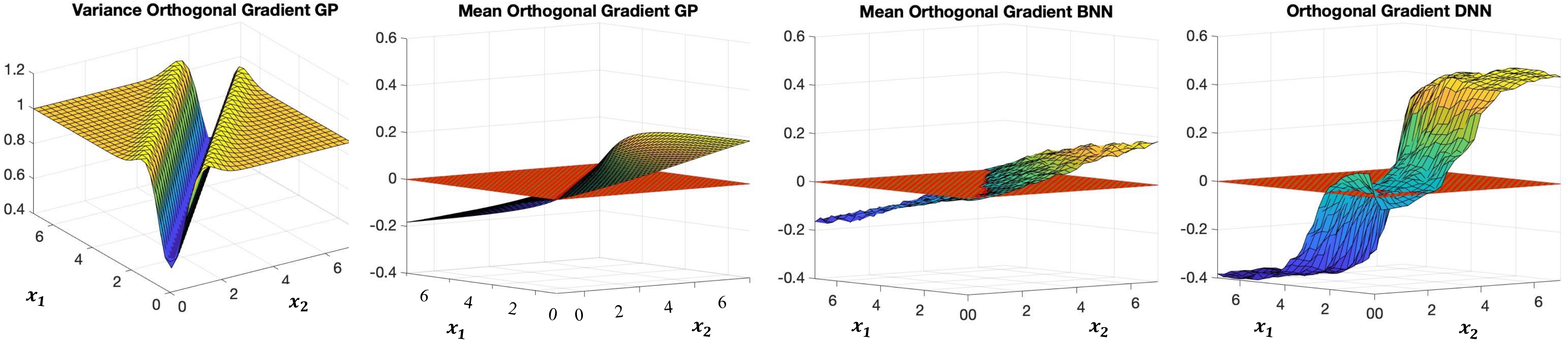}
    \caption{ We consider a regression problem with data manifold given by the line $x_1=x_2$ and data generated by the function $\frac{2x_1^2}{10}-x_1$. {  For this problem we train a BNN with HMC and a deterministic NN (DNN) with SGD of same architecture: relu activation functions and one hidden layer with 512 neurons. Furthermore, we also train a GP with kernel equal to that of an infinitely-wide BNN with relu activation functions and 1 hidden layer.} All learning models achieve accuracy $>99\%$. We plot the mean and variance of the scalar projection of the gradient { of the GP} in a direction orthogonal to the data manifold { for all points in the ambient space}  and compare it to the mean of the same quantity for the BNN and DNN.  Plane $z=0$ is plotted in red. %It is possible to see that in the deterministic case (last figure on the right), not only  the absolute value of the orthogonal gradient is not always close to zero in the data manifold, but it is substantially greater than that of BNN and GPs everywhere in the ambient space. \ap{I'd put the details in the text and shorten this caption up. }
    \label{fig:provaBNNCOnvergenza}
    }
\end{figure*}

%\ap{Rather than just saying ``Supplementary Material'' we should explicitly refer to what section we are talking about and refer to that by using the slash ref command. Also I think is an appendix not a supplementary material.}
\iffalse
\begin{figure}[ht]
\centering
\includegraphics[width=.9\columnwidth]{}
\caption{We plot the infinite norm of the expected loss gradients for  two BNNs trained on MNIST  {(Figure \ref{fig:gradient-heatmaps}a)} and Fashion MNIST  {(Figure \ref{fig:gradient-heatmaps}b)} for some example images and for different number of samples from the posterior predictive distribution. For training the BNN on MNIST we employed HMC, while VI is employed for  Fashion MNIST. To the right of the images, we plot a heat map of gradient values. In all cases we observe that the expected loss gradient tends to decrease when increasing the number of samples.
}
\label{fig:gradient-heatmaps}
\end{figure}
\fi

In this section we empirically investigate our theoretical findings on different BNNs.
We train a variety of BNNs on the MNIST and Fashion MNIST \citep{xiao2017fashion} datasets {(using their standard 60k/10k train-test split)}, and evaluate their posterior distributions using HMC and VI approximate inference methods.
{  Details on the architectural and training hyperparameters used throughout the current section are listed in table form in Appendix Section \ref{sec:hyper_params}. 
In Section \ref{sec:ExperimentalGPtoBNN}, we empirically validate the theoretical results of Section \ref{sec:adv_robustness_bayesian_averaging} on a synthetic regression dataset on which we have access to the data manifold.}
In Section \ref{sec:ExpGradient}, we focus on image classification tasks and experimentally verify the validity of the zero-averaging property of gradients implied by the  {theoretical results discussed in Section \ref{sec:adv_robustness_bayesian_averaging}}, 
and discuss its implications on the behaviours of FGSM and PGD attacks on BNNs in Section \ref{Sec:ExpAttacks}.
In Section \ref{sec:ExpRobustnessAccuracyTradeoff} we analyse the relationship between robustness and accuracy on thousands of different NN architectures, comparing the results obtained by Bayesian and by deterministic training.  Further, in Section \ref{sec:non_gradient_attacks} we investigate the robustness of BNNs on a  gradient-free adversarial attack \citep{athalye2018obfuscated}.  { Experiments were performed with anNVIDIA 2080Ti GPU on a machine with 20-core Intel Core
Xeon 6230 and 256GB of RAM.}
%Details on the experimental settings and BNN training parameters can be found in Appendix \ref{sec:training_hyperparams}. 
{ 

\subsection{Analysis of the Convergence of BNN gradients}
\label{sec:ExperimentalGPtoBNN}
To investigate the limit behaviour of the orthogonal and parallel gradients of BNNs, in Figure \ref{fig:provaBNNCOnvergenza} we consider a synthetic regression example, where the ambient space is $\mathbb{R}^2$ and the data manifold is the line $x_1=x_2$. We consider data generated by the function $\frac{2x_1^2}{10}-x_1$ and on these data we train both a 1 hidden layer BNN with 512 hidden neurons with {  Hamiltonian Monte Carlo (HMC)} and the corresponding limiting GP, as well as a deterministic NN trained with stochastic gradient descent (SGD) on the same architecture.  For each point in the ambient space we compute the variance and mean of the scalar projection of the expected posterior gradient of the prediction of the model into a unit direction orthogonal to the data manifold. 
As expected (see Remark \ref{Remak:Variance}), we observe that the variance is non-vanishing on the ambient space, that is, both inside and outside the data manifold.
Furthermore, while for GPs and BNNs the mean of the orthogonal projection is respectively zero and approximately zero in the data manifold, for the DNN this is not always the case. Additionally, the absolute value of the expected orthogonal projection in any point in the ambient space is substantially greater for the DNN compared to the GP and BNN case.

\begin{table*}[b]
    \centering
    % \setlength{\belowcaptionskip}{-2pt}
   % \vspace{-.9mm}
    \begin{tabular}{ l  c  c  r  c  c  c }
        \hline
        Method & Hidden Neurons  & avg ort grad & max ort grad & avg par grad & max par grad  \\ 
        \midrule 
        HMC ($\sigma=0.4$) & 16 & 0.0556 & 0.1116 & 1.1961 & 2.2275  \\
        HMC ($\sigma=0.4$) & 64 &   0.0613 & 0.0836 & 1.2028 & 2.1420  \\
%         HMC ($\sigma=0.4$) & 128 &   0.0613 & 0.11647 & 1.1892 & 2.1557  \\
        HMC ($\sigma=0.4$) & 512 & 0.0132 & 0.0355 & 1.1991 & 2.1724  \\
  %      HMC ($\sigma=0.4$) & 1024 & 0.02180 & 0.04202 & 1.2008 & 2.1837 \\
        HMC ($\sigma=0.25$) & 16 &   0.0976 & 0.1508 & 1.2537 & 2.3727 \\
               HMC ($\sigma=0.25$) & 64 &   0.0702 & 0.1052& 1.2402 & 2.3064 \\
 %       HMC ($\sigma=0.25$) & 128 & 0.0399 & 0.0909 & 1.2449 & 2.3357  \\
        HMC ($\sigma=0.25$) & 512 &  0.0359 & 0.0554 & 1.2487 & 2.3353 \\
 %        HMC ($\sigma=0.25$) & 1024 &  0.0201 & 0.0323 & 1.2524 & 2.3409 \\
        HMC ($\sigma=0.1$) & 16 & 0.1127 & 0.1921 & 1.3214 & 2.6608  \\
          HMC ($\sigma=0.1$) & 64 & 0.0539 & 0.1271 & 1.3171 & 2.6290  \\
 %       HMC ($\sigma=0.1$) & 128 & 0.0498 & 0.1127 & 1.3181 & 2.6535 \\
        HMC ($\sigma=0.1$) & 512 & 0.0479  & 0.0900 & 1.3251 & 2.6669 \\
%         HMC ($\sigma=0.1$) & 1024 & 0.0469  & 0.0848 & 1.3224 & 2.6563 \\
        SGD & 16 & 0.4337 & 0.8056 & 1.5525& 3.1999 \\
        SGD & 64 & 0.2612 & 0.6676 & 1.5435 & 3.2238  \\
  %      SGD & 128 & 0.1201 & 0.3624 & 1.5403 & 3.2218  \\
        SGD & 512 & 0.1234 & 0.2491 & 1.5262 & 3.1424 \\
  %      SGD & 1024 & 0.0660 & 0.1507 & 1.5402 & 3.2265 \\
        \hline
    \end{tabular}
    \caption{For the setting described in Figure \ref{fig:provaBNNCOnvergenza} we trained various neural networks with HMC and SGD. Average orthogonal and parallel gradients and maximum orthogonal and parallel gradients are computed over $70$ points sampled from the data manifold. All architectures have 1 hidden layer. }
    \label{tab:setup}
    % \vspace{-2.5mm}
\end{table*}

In order to further investigate the qualitative behaviour observed in Figure \ref{fig:provaBNNCOnvergenza}, we consider the same setting of Figure \ref{fig:provaBNNCOnvergenza} and, in Table \ref{tab:setup}, we report the average orthogonal and parallel scalar projection of the gradient for various BNN architectures  and compare it against the same architectures trained with SGD. In particular, we consider BNNs trained with HMC with different values of $\sigma $,  the standard deviation of the normal likelihood employed in regression training, and for different number of neurons. All architectures achieve expected loss $\leq 0.02$ and average and maximum gradients are evaluated on the same $70$ points sampled from the data manifold.  As expected from Corollary \ref{Corollary:GaussianLimitCancellation}, for all values of $\sigma$ both average and maximum orthogonal gradients decrease with a { wider} network, while the parallel gradient remains approximately constant. For SGD the orthogonal gradients are in all cases substantially larger compared to those of BNNs. However, interestingly, even in the SGD case, { wider} networks achieve lower orthogonal gradients. This suggests that the limit established in Corollary \ref{Corollary:GaussianLimitCancellation} and \ref{corollary:ClassificationCase} may also partly benefit neural networks trained with SGD, although the convergence is much slower.
Related to this, we also note that $\sigma$ has a substantial impact on the robustness of the BNN. This can be understood because, as shown in Figure \ref{fig:LEarnedBNNManifold} in the Appendix, the lower $\sigma$ the smaller the variance of the posterior BNN and the larger the orthogonal gradient of the limit GP (see Figure \ref{fig:ComparisionOrthogonalDerivativGP}). This indicates how the uncertainty can be beneficial in increasing the robustness of the predictions and how training and model parameters can play an essential role for BNN robustness.

\begin{figure}[ht]
    \centering
    \includegraphics[width=1.\linewidth]{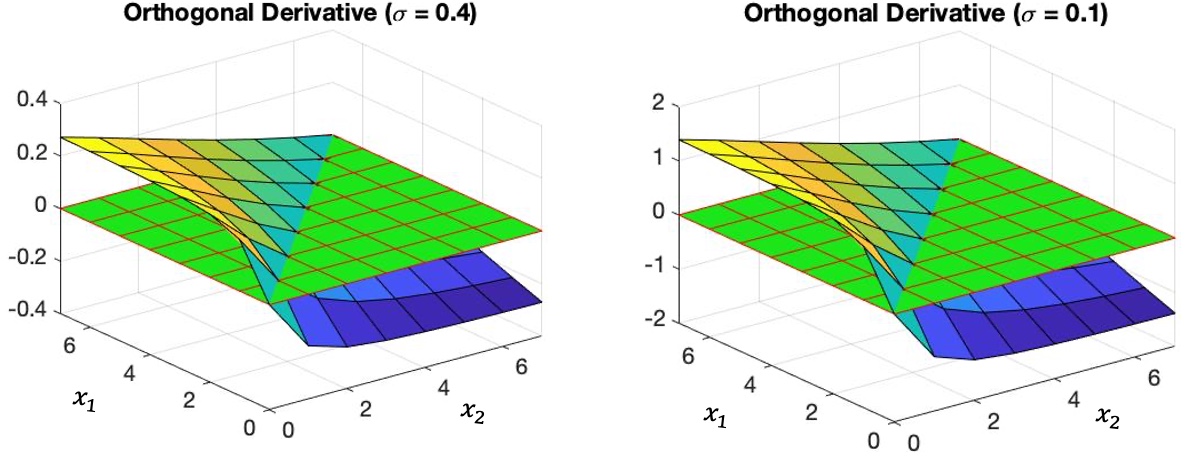}
    \caption{We plot the scalar projection of the orthogonal gradient of the GP limit of neural networks with ReLU activation functions and one hidden layer for $\sigma=0.1$ and $\sigma=0.4$ for the settings of Figure \ref{fig:provaBNNCOnvergenza}, where $\sigma$ is the standard deviation of { the} likelihood. It is possible to observe how in both cases the orthogonal derivative is identically $0$ on the data manifold. However, outside of the data manifold $\sigma$ has a large effect on the orthogonal derivative.
    }
    \label{fig:ComparisionOrthogonalDerivativGP}
\end{figure}

}

\subsection{Evaluation of the Gradient of the Loss for BNNs on Image Classification Tasks}
\label{sec:ExpGradient}

We investigate the vanishing behavior of input gradients of the loss - established for the BNN classification case by Corollary~\ref{corollary:ClassificationCase} for the limit regime - in the finite, practical settings, that is with a finite-width BNN. 
Specifically, { we consider various neural network architectures (those that will be reported in Section \ref{sec:ComparisionVIHMC}) trained with both HMC and VI with Bayes by Backprop \citep{blundell2015weight} and} we select the architectures achieving the highest test accuracy: a two hidden layers BNN (with 1024 neurons per layer) for HMC and a three hidden layers BNN (512 neurons per layer) for VI. %\ap{Why is it two different nets? It makes comparison a bit difficult (the assumption of infinite-width is met more by the net with HMC than the one with VI in this setting) and makes it look a bit like we are cherry picking our results. Not suggesting we need to redo the experiments, just wondering why that is the case}\gc{we are cherry picking, these are the choices of architectures with the highest accuracy}\ap{Ok, maybe then, we should say, that, we are showcasing the results for the most accurate architectures we obtain. Then I would say, to significantly tone down the discussion we have here on how HMC is better than VI, because (1) the Images are different (2) the architecture are different – so one can't really say whether the different magnitudes are due to HMC vs. VI or are due to (1) and (2). Then in the following sections we will still have enough conclusive evidence to make that claim. No reason to leave ourselves open to attack in here.}\gc{agree}
These achieve approximately $95\%$ test accuracy on MNIST and $89\%$ on Fashion MNIST when trained with HMC; as well as $95\%$ and $92\%$, respectively, when trained with VI.  Details about the hyperparameters used for training can be found in Tables \ref{tbl:bnns_hmc_1} and \ref{table:bnns_vi} listed in Appendix Section \ref{sec:hyper_params}.

\begin{figure}[ht]
\centering
\includegraphics[width=.9\columnwidth]{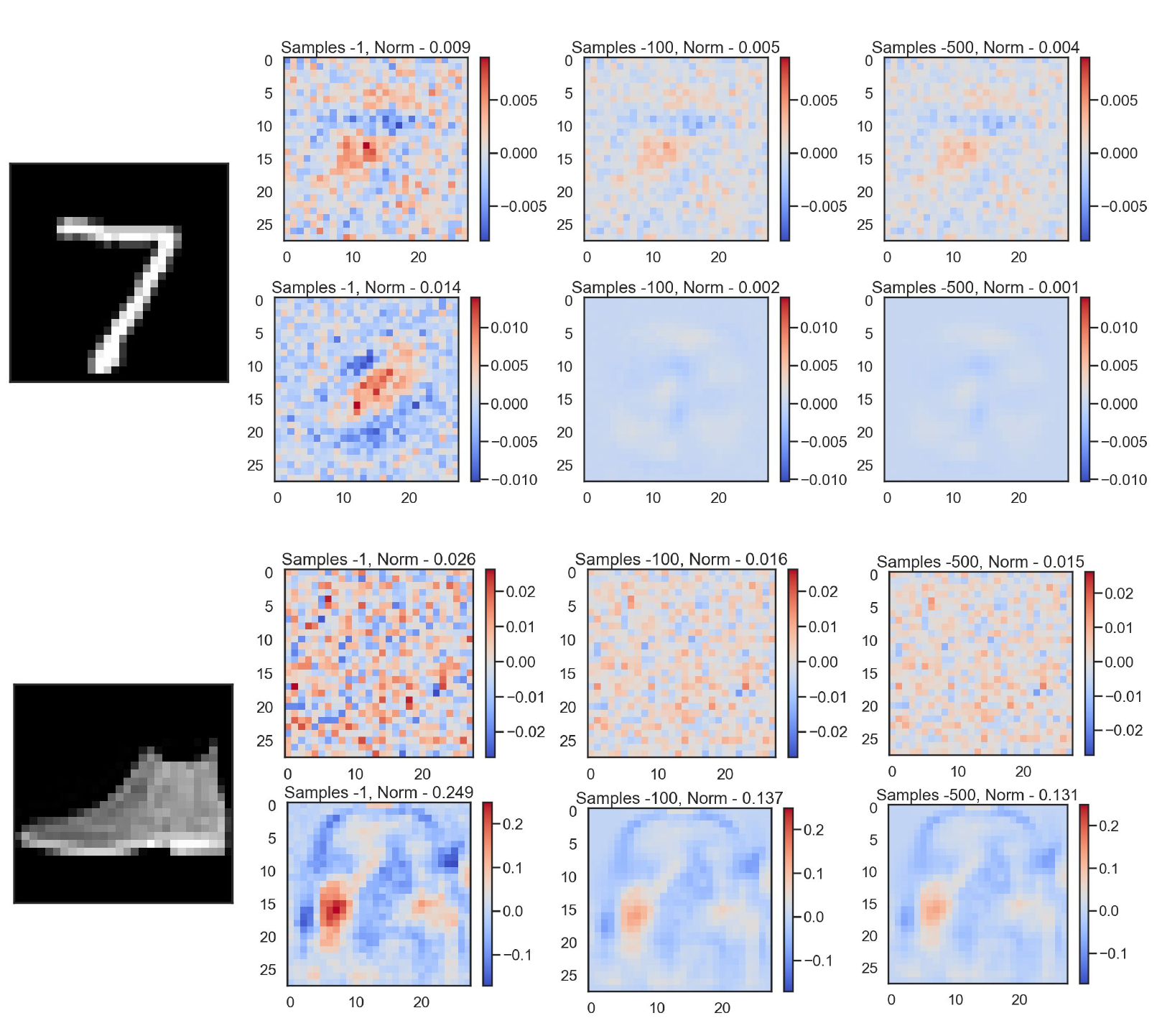}
\caption{%\LL{What Norm in the figure stands for?} \ap{I don't understand the caption says you plot the norms but I think you plot the component of the gradients, the infinite norms are only included in the titles of each subplot. Or am I missing something?} 
We plot the input gradient (and report its $\ell_\infty$-norm under ``Norm'' in the title of each plot) of the expected loss gradients for  two BNNs trained on MNIST  (top rows) and Fashion MNIST  (bottom rows) for some example images and for different number of samples from the posterior predictive distribution. For training the BNN on MNIST we employ HMC (top most row of next to each image), and VI (bottom most row next to each image). To the right of the images, we plot a heat map of gradient values. In all cases we observe that the expected loss gradients decrease when increasing the number of samples.
}
\label{fig:gradient-heatmaps}
\end{figure}

Figure~\ref{fig:gradient-heatmaps} investigates the behaviour of the component-wise expectation of the loss gradient as more samples from the posterior distribution are incorporated into the BNN predictive distribution.
We observe that as the number of samples taken from the posterior distribution of $\mathbf{w}$ increases, all the components of the gradient decrease in absolute value.
Notice that the gradients of the individual NNs (that is those with just one sampled weight vector), are markedly larger. 
%As shown in Theorem~\ref{thm:main}, it is only through the Bayesian averaging of ensemble predictors that the gradients cancel out.
This is also confirmed in Figure~\ref{fig:expLossGradients_stripplot_HMC}, where we provide a systematic analysis of the aggregated gradient convergence properties on $250$ test images for MNIST and Fashion MNIST.  {We plot the decrease in the maximum gradient magnitude (with shaded error regions corresponding to the square root of the variance) as we increase the number of samples used to approximate the input gradient.}
We observe that for both HMC and VI the magnitude of the gradient components drops as the number of samples increases. %, and tends to stabilize around zero already with 100 samples drawn from the posterior distribution, suggesting that the limit conditions laid down in Corollary \ref{corollary:ClassificationCase} are approximately met by the BNNs analysed here.
Notice that the gradients computed on HMC trained networks drops more quickly and towards a smaller value compared to VI trained networks. 
This is in accordance to what is discussed in Section \ref{sec:limitations}, as VI introduces additional approximations in the Bayesian posterior computation.
%
\iffalse
\begin{figure}[ht]
\centering
\includegraphics[width=1.\columnwidth]{}
\caption{
The components of the expected loss gradients approach zero as the number of samples from the posterior distribution increases. For each number of samples, the figure shows $784$ gradient components for $1$k different test images, from both the MNIST and Fashion MNIST datasets. The gradients are computed on HMC (subfigure (a)) and VI (subfigure (b)) trained BNNs.}
\label{fig:expLossGradients_stripplot_HMC}
\end{figure}
\fi
\begin{figure}[ht]
\centering
\includegraphics[width=1.\columnwidth]{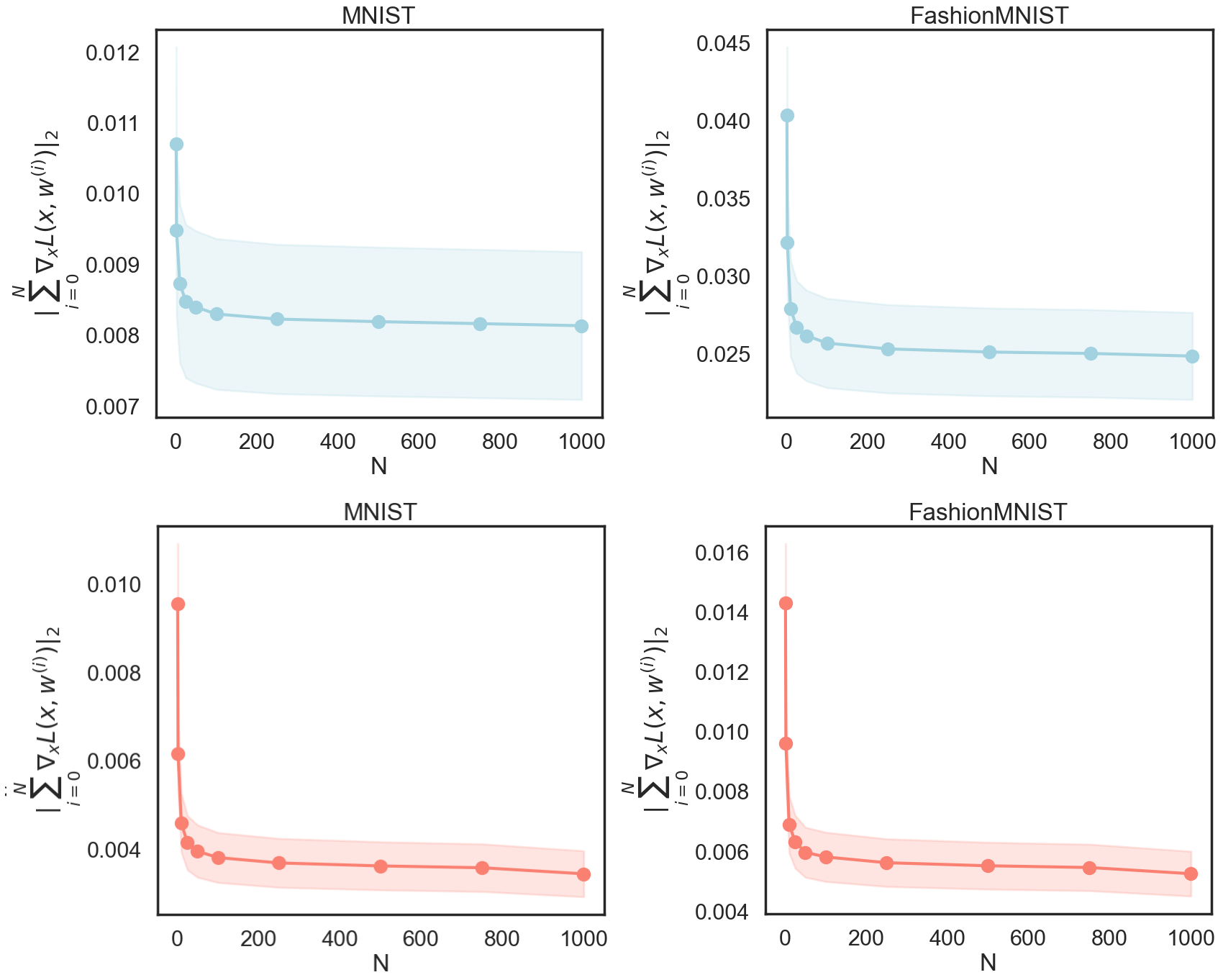}
\caption{
We plot the $\ell_2$ norm of the input gradient as we increase the number of samples. In the top row (in blue) we plot the trend of the input gradient for a BNN { trained with VI} on MNIST (left) and FashionMNIST (right). In the bottom row (in red) we plot the trend of the input gradient for an HMC-trained BNN on MNIST (left) and FashionMNIST (right). In all cases, we observe the expected trend that the norm of the gradient decreases as we increase the number of samples. }
\label{fig:expLossGradients_stripplot_HMC}
\end{figure}

\subsection{Gradient-Based Attacks for BNNs}
\label{Sec:ExpAttacks}
The fact that gradient cancellation occurs in the limit does not directly imply that BNNs' predictions are robust to gradient-based attacks in the finite case.
For example, FGSM attacks \citep{goodfellow2014explaining} are crafted such that the direction of the manipulation is given only by the sign of the expectation of the loss gradient and not by its magnitude. 
Thus, even if the entries of the expectation drop to an infinitesimal magnitude but maintain a meaningful sign, then FGSM could potentially produce effective attacks.
In order to test the implications of vanishing gradients on the robustness of the posterior predictive distribution against gradient-based attacks, we compare the behaviour of FGSM, PGD\footnote{With 15 iterations and 1 restart.} \citep{pgdattack} {and CW \cite{carlini2016evaluating} attacks, arguably the most commonly employed gradient-based attacks}. 
%\begin{wraptable}{r}{0.55\linewidth}
\begin{table}\begin{center}
  \caption{Adversarial robustness { defined as the proportion of test samples for which the attack is not successful for}  BNNs trained with HMC and VI with respect to  FGSM, PGD  {and CW}.
  }
   {\begin{tabular}{c|c|c|c|c}
    \toprule
    \textbf{Dataset/Method} & \textbf{Acc.} & \textbf{FGSM} & \textbf{PGD} & \textbf{CW} \\
    \midrule
     MNIST/SGD & 0.984 & 0.376 & 0.334 & 0.358 \\ \hline
     MNIST/VI & 0.974 & 0.623 & 0.508 & 0.512 \\ \hline
    MNIST/HMC & 0.914 & 0.876 & 0.864 & 0.859 \\ \hline
    Fashion/SGD & 0.891 & 0.510 & 0.410 & 0.432   \\ \hline
    Fashion/VI & 0.862 & 0.622 & 0.581 & 0.588   \\ \hline
    Fashion/HMC & 0.828 & 0.661 & 0.639 & 0.650  \\ \bottomrule
  \end{tabular}}
\label{tab:attackcomp}
%\end{wraptable}
\end{center}\end{table}
%Namely, the random attack mimics a randomised version of FGSM in which the sign of the attack is sampled at random, 
%In practice, we perturb each component of a test image by a random value in  $\{-\epsilon, \epsilon\}$. 

In particular, in  {Table~\ref{tab:attackcomp} we evaluate a single hidden-layer neural network architecture with 512 hidden neurons trained on MNIST and Fashion MNIST using HMC, { , VI (Bayes by Backprop \citep{blundell2015weight}),} , and SGD. We then attack it using attack strength $\epsilon=0.1$ for MNIST and $\epsilon=0.05$ for FashionMNIST.
 % accuracy (i.e., the probability \ap{Probability with respect to what? Does it mean the softmax output, or does it mean the empirical accuracy computed on a test set?}\gc{actually this sentence is wrong!} that the network returns the ground truth label for the perturbed input)
For each image, we compute the expected gradient using 50 posterior samples. 
Details on the hyper-parameters used in each setting can be found in Tables \ref{table:MNIST_hmc}--\ref{table:MNIST_SGD} listed in Appendix Section \ref{sec:hyper_params} (entries marked with a *).
%The attacks were performed with $\epsilon=0.05$ and using the sparse categorical cross-entropy loss function.  
We observe that BNNs are genereally more robust against each of the gradient-based attacks. % and that for SGD trained NNs on MNIST, PGD is considerably stronger than FGSM. 
Interestingly, we also notice that for BNNs, FGSM performs comparably to PGD and CW attacks with  little increase in attack success rate when using stronger attacks. Furthermore, we should also note that for FashionMNIST, the attacks are more successful than in the MNIST case. 
As FashionMNIST poses a more complicated classification problem than MNIST, the BNNs obtain less accuracy in the former. In agreement with the discussion in Section \ref{sec:limitations}, this implies higher loss and that the conditions set up in the main theorems and corollaries are less approximately met.
%This can be explained because BNNs trained on Fashion MNIST are less accurate compared to the MNIST case. Hence, even though the orthogonal gradient is close to zero, the parallel gradient is non zero and the attack can find successful directions within the data manifold. 
} 

\subsection{Robustness Accuracy Analysis in Deterministic and Bayesian Neural Networks}
\label{sec:ComparisionVIHMC}
In Section \ref{sec:limitations}, we noticed that,  {as proxy of the loss and of convergence}, high accuracy might be related to high robustness to gradient-based attacks for BNNs.
Notice, that this would run counter to what has been observed for deterministic NNs trained with SGD \citep{su2018robustness}.

In this section, we look at an array of more than 1000 BNNs with different hyperparameters trained with HMC and VI { (Bayes by Backprop \citep{blundell2015weight})}  on MNIST and Fashion MNIST (details on training and architecture hyper-parameters explored can be found in Tables~\ref{table:MNIST_hmc}--\ref{table:MNIST_SGD} listed in Appendix Section \ref{sec:hyper_params}).
We experimentally evaluate their accuracy/robustness trade-off on FGSM attacks as compared to that of the same neural network architectures trained via standard (i.e., non-Bayesian) SGD based methods.
For the robustness evaluation we consider the average difference in the softmax prediction between the original test points and the crafted adversarial input, as this provides a quantitative and smooth measure of adversarial robustness that is closely related with misclassification ratios \citep{cardelli2019statistical}.
That is, for a collection of $N$ test point, we compute 
\begin{align}
\label{Eqn:softmaxDifference}
\frac{1}{N}    \sum_{j=1}^{N}   \big| \langle f(\mathbf{x}_j,\mathbf{w}) \rangle_{p(\mathbf{w}|D)} -  \langle f(\tilde{\mathbf{x}}_j,\mathbf{w}) \rangle_{p(\mathbf{w}|D)} \big|_\infty.
\end{align}

\label{sec:ExpRobustnessAccuracyTradeoff}
\begin{figure*}[ht]
    \centering
    \includegraphics[width=.9\linewidth]{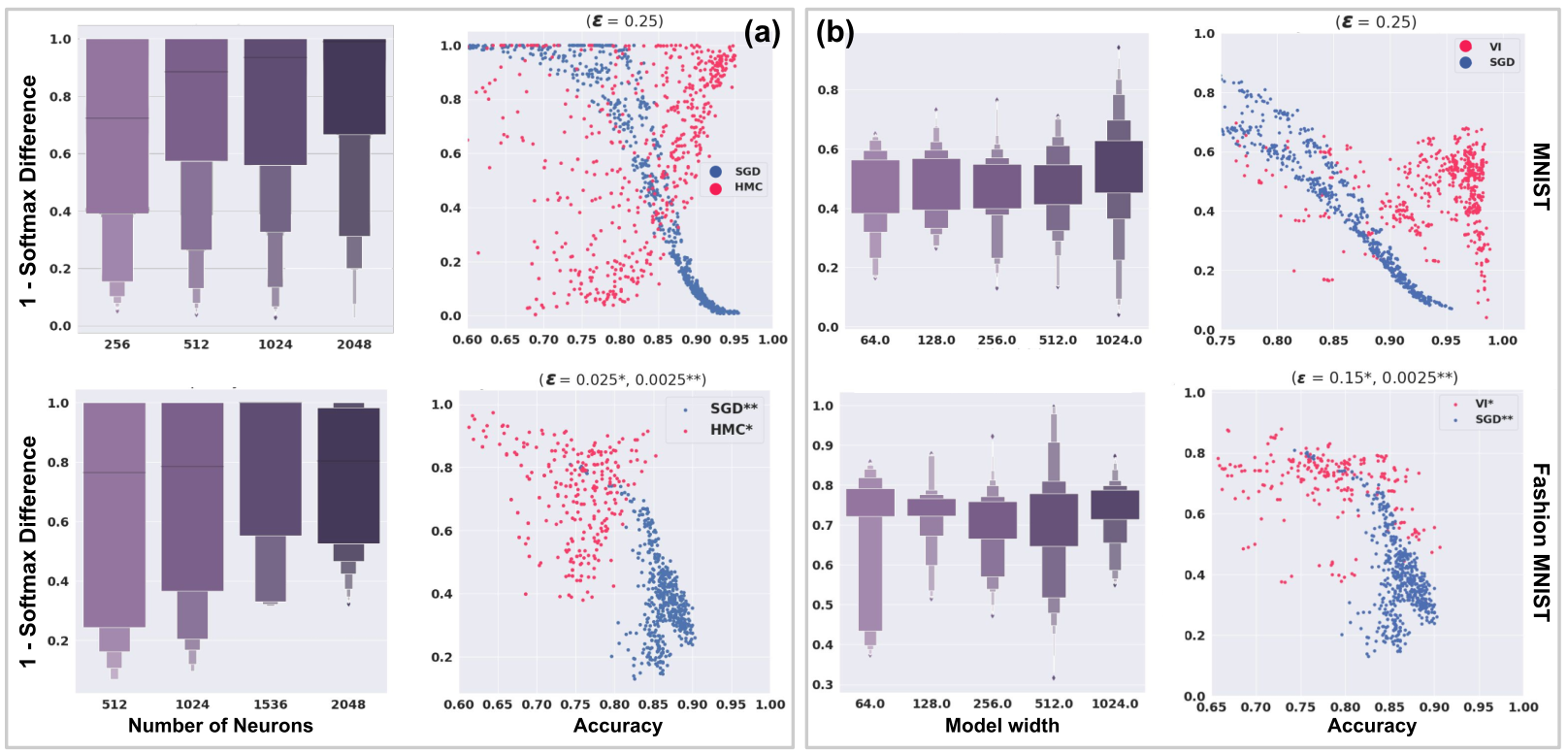}
    \caption{Robustness-Accuracy trade-off on MNIST (first row) and Fashion MNIST (second row) for BNNs trained with HMC (a), VI (b) and SGD (blue dots){ , where softmax difference is computed according to Eqn \eqref{Eqn:softmaxDifference} and denotes denotes the average maximal difference in softmax value for the specific neural network for an input $\epsilon-$ball of input points computed via FGSM attack. } While a trade-off between accuracy and robustness occur for deterministic NNs, experiments on HMC show a positive correlation between accuracy and robustness. The boxplots show the correlation between model capacity and robustness.
    Different attack strength ($\epsilon$) are used for the three methods accordingly to their average robustness.
    }
    \label{fig:bayesmnist}
\end{figure*}

The results of the analysis are plotted in Figure \ref{fig:bayesmnist}.
Each dot in the scatter plots represents the results obtained for each specific network architecture trained with SGD (blue dots), HMC (pink dots in plots (a)) and VI (pink dots in plots (b)).
As already reported in the literature \citep{su2018robustness}
we observe a marked trade-off between accuracy and robustness (i.e., 1 - softmax difference) for high-performing deterministic networks.
Interestingly, this trend is  reversed for BNNs trained with HMC (plots (a) in Figure \ref{fig:bayesmnist}) where we find that as networks become more accurate, they additionally become more robust to FGSM attacks as well. 
We further examine this trend in the boxplots that represent the effect that the network width has on the robustness of the resulting posterior.
We find the existence of an increasing trend in robustness as the number of neurons in the network is increased.
This is in line with  our theoretical findings, i.e., as the BNN approaches the infinite width limit, the conditions for Corollary \ref{corollary:ClassificationCase} are approximately met and the network is protected against gradient-based attacks.

On the other hand, the trade-off behaviours are less obvious for the BNNs trained with VI and on Fashion-MNIST.
In particular, in plot (b) of Figure \ref{fig:bayesmnist} we find that, similarly to the deterministic case, also for BNNs trained with VI, robustness seems to have a negative correlation with accuracy, albeit less marked than for SGD.
Furthermore, { similarly than for HMC, we also observe that for VI the robustness of the BNNs tend to increase with the width of the network. However, the trend is less clear compared to the HMC case. This can be linked to the fact that VI is known to often under-approximate uncertainty \citep{rasmussen2006gaussian}, which may lead to posterior with excessively small variance, thus behaving similarly to a deterministic NN.  }
{ We should also emphasize that experimental results in this paper have been performed with Bayes by backprop \cite{blundell2015weight}. However, approximate Bayesian inference techniques for deep learning is an active area of research, including recent developments, e.g., SWAG \citep{maddox2019simple} or Noisy Adam \citep{zhang2018noisy}, such developments could, in principle, lead to a behaviour closer to that exhibited by HMC.}

\subsection{Gradient-Free Adversarial Attacks}
\label{sec:non_gradient_attacks}

In this section, we empirically evaluate the most accurate BNN posteriors on MNIST and Fashion MNIST from Figure \ref{fig:bayesmnist} against gradient-free adversarial attacks. Specifically, we consider ZOO \citep{chen2017zoo}, a gradient-free adversarial attack based on a finite-difference approximation of the gradient of the loss obtained by querying the neural network (in the BNN case the attacker queries the posterior distribution). In particular, we selected ZOO because it has been shown to be effective even when tested on networks that purposefully obfuscate their gradients or have vanishing gradients \citep{athalye2018obfuscated}. %Such gradient-free attacks can be adopted to BNNs by querying the posterior predictive distribution to build an approximation to the input gradient.
In Figure~\ref{fig:non-gradient-attacks} we observe that similarly to gradient-based methods, ZOO is substantially less effective on BNNs compared to deterministic NNs in all cases, with BNNs again achieving both high accuracy and high robustness simultaneously. %\ap{I am not sure what does this mean, it is not like we are plotting a tradeoff – and just looking at the bars I can't really see any meangful correlation here}. \MW{That is fair, I meant this in a different way, but that wasnt clear. I changed it}
Furthermore, once  again HMC is more robust to the attack than VI, which is in turn substantially more robust than deterministic NNs. This suggests how, similarly to what observed in the previous subsections, a more accurate posterior distribution may lead to a more robust model also to gradient-free adversarial attacks. 
%\ap{I don't get the following paragraph and I'd be inclined to remove it. I don't understand the point it is trying to make. It seems like it's only trying to say that the gradient-free attack we have chosen performs poorly on deterministic neural networks and so we should ahve chosen another one? Myabe I am missing something though.}
%Furthermore,  while for BNNs the effectiveness of ZOO is similar to that of gradient-based attacks (both are similarly ineffective), for SGD trained networks, gradient-free attacks are generally less effective than gradient-based attacks. For instance, an SGD trained net with accuracy 0.99 has robustness to gradient-free attacks of 0.39, which would be an outlier if plotted along side the gradient-based evaluation in Figure~\ref{fig:bayesmnist}. 

%\ap{Would be good to quantify the difference wrt to gradeint-based attack. Do we get the same results, or the robustness decreases (e.g., by 10 percent  or whaterver)}

\begin{figure*}[ht]
\centering
\includegraphics[width=0.8\textwidth]{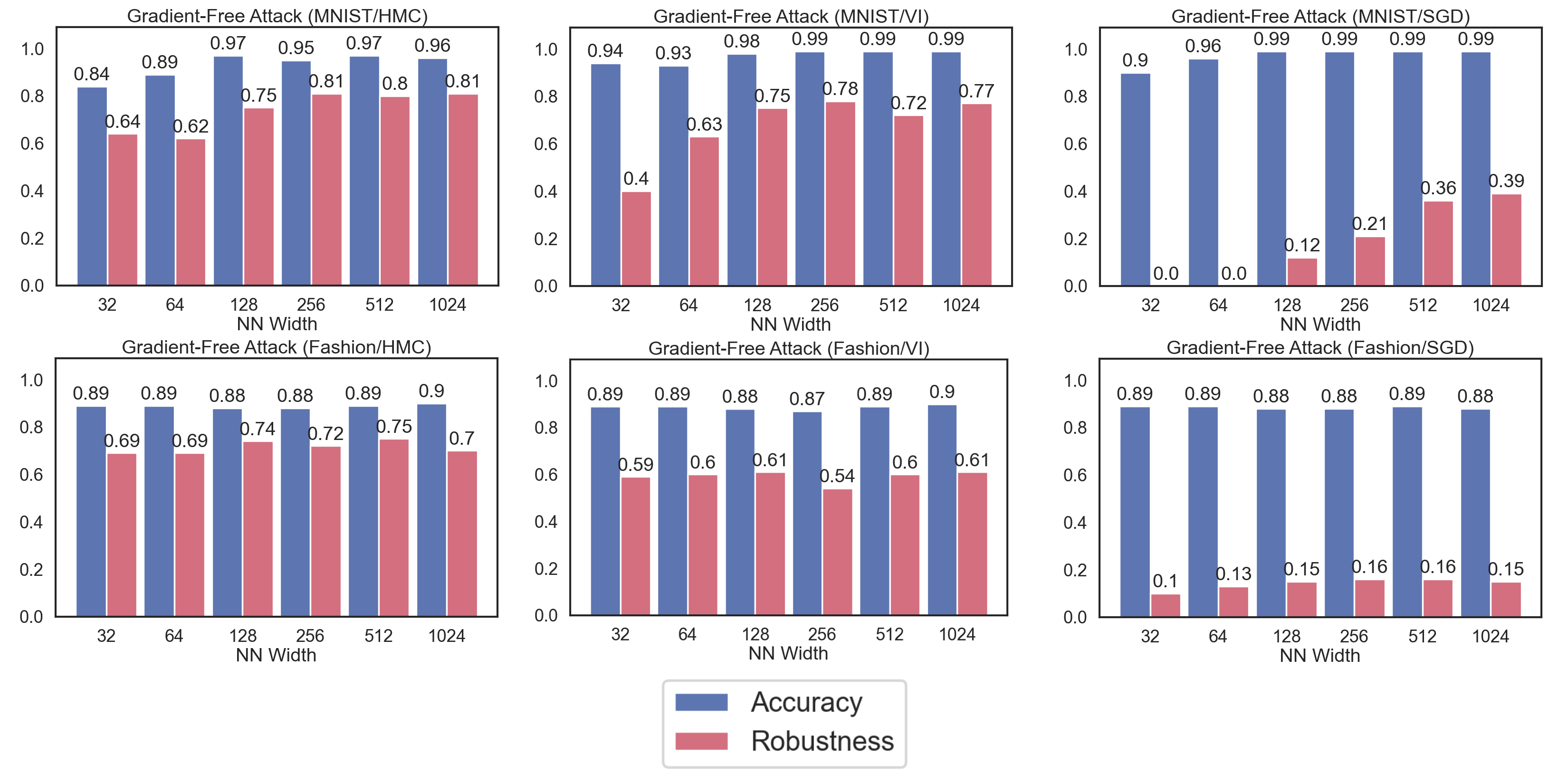}
\caption{%\gc{error in the legend! (misspelled robustness)} 
Gradient-free adversarial attacks on BNNs display similar behavior to gradient-based attacks. We evaluate the more accurate networks from Figure~\ref{fig:bayesmnist} with gradient-free attacks on MNIST (first row) and Fashion MNIST (second row) for BNNs trained with HMC (left column), VI (center column), and SGD (right column). We use the same attack parameters as in the Figure~\ref{fig:bayesmnist}, but use ZOO as an attack method. 
}
\label{fig:non-gradient-attacks}
\end{figure*}

\section{Conclusions}
The quest for robust, data-driven models is  an essential component towards the construction of AI-based technologies. 
In this respect, we believe that the fact that Bayesian ensembles of NNs can provide additional robustness against a broad class of adversarial attacks will be of great relevance. 
While promising, this result comes with some significant limitations. First, {  while our empirical results present encouraging examples of robustness,} our theoretical analysis is  { only guaranteed to hold for infinite neural networks}.  Secondly, and perhaps more importantly, performing Bayesian inference in large non-linear models is extremely challenging. In fact, while in our experiments cheaper approximations, such as VI, also enjoyed a degree of adversarial robustness, { albeit reduced compared to NNs trained with more accurate Bayesian inference methods}, there are no guarantees that this will hold in general. To this end, we hope that this result will spark renewed interest in the pursuit of efficient Bayesian inference algorithms, {which have the potential to lead to learning models that are intrinsically accurate and robust.}
%We notice here that several other studies \citep{bekasov2018bayesian, li2017dropout,feinman2017detecting,rawat2017adversarial} have focused on pointwise uncertainty to detect adversarial behaviour; while this does not appear relevant in the limit scenario, it might be a promising indicator of robustness in finite data conditions.
%
%Thirdly, we have focused on two attack strategies which directly utilise gradients in our empirical evaluation. More complex gradient-based attacks, such as \citep{carlini2016evaluating,papernot2017practical,moosavi2016deepfool}, as well as non-gradient based/ query-based attacks, also exist \citep{ilyas2018black, wicker2018feature}. Evaluating the robustness of BNNs against these attacks would also be interesting.

\bibliography{bibliography}
\clearpage
\appendix

\subsection{Hyperparameters used for Training}\label{sec:hyper_params}

\begin{figure*}[ht]
    \centering
    \includegraphics[width=.75\linewidth]{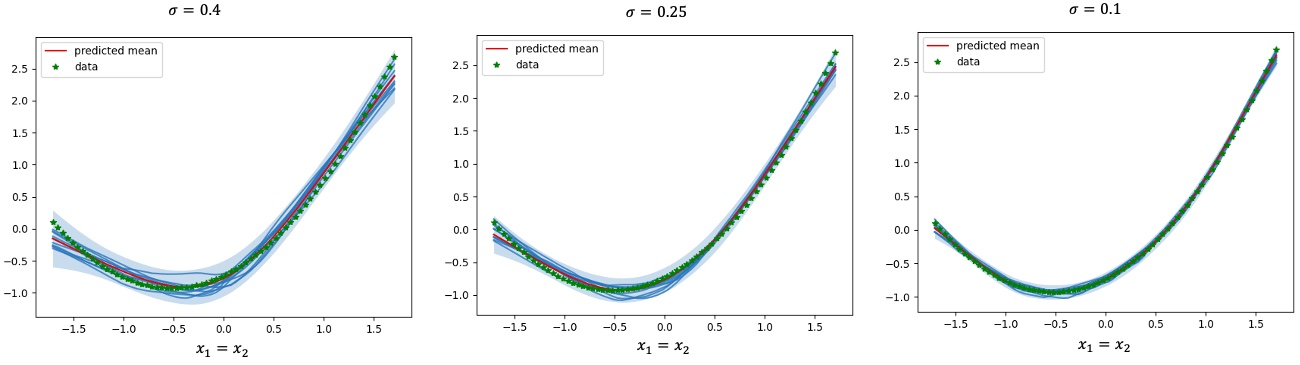}
    \caption{For the setting in Figure \ref{fig:provaBNNCOnvergenza} we plot the prediction of $3$ different BNNs trained with HMC. All architectures have 1 hidden layers with $64$ hidden neurons and in all cases the same training parameters have been used except for the  $\sigma,$ the standard deviation of the observation noise. 
    }
    \label{fig:LEarnedBNNManifold}
\end{figure*}

Details on hyperparameters used for training of BNNs and deterministic NNs  are listed in Tables \ref{tbl:bnns_hmc_1}--\ref{table:MNIST_SGD}.

\begin{table}[ht] {
\centering
\caption{
Hyperparameters for the BNN using HMC of Figures \ref{fig:gradient-heatmaps} and \ref{fig:expLossGradients_stripplot_HMC}.
}
\begin{tabular}{ p{2.5cm}||p{2cm}|p{2cm}}
 \multicolumn{3}{c}{\textbf{Training hyperparameters for HMC}}\\
 \toprule
   Dataset & MNIST & Fashion MNIST \\
 \hline
 Training inputs & 60k & 60k \\
 \hline
 Hidden size & 1024 & 1024 \\
 \hline
 Nonlinear activation &  ReLU &  ReLU\\
 \hline
 Architecture & Fully Connected & Fully Connected \\
 \hline
 Posterior Samples & 500 & 500\\
 \hline
 Numerical Integrator Stepsize & 0.002 & 0.001\\
 \hline
 Number of steps for Numerical Integrator & 10 & 10\\
 \bottomrule
\end{tabular}
\label{tbl:bnns_hmc_1}}
\end{table}

\begin{table}[ht]
\centering
 {
\caption{
Hyperparameters for the BNN using VI of Figures \ref{fig:gradient-heatmaps} and \ref{fig:expLossGradients_stripplot_HMC}.}
\begin{tabular}{p{3cm}||p{2cm}|p{2cm}}
 \multicolumn{3}{c}{\textbf{Training hyperparameters for VI}}\\
 \toprule
 Dataset & MNIST & Fashion MNIST \\
 \hline
 Training inputs & 60k & 60k \\
 \hline
 Hidden size & 512 & 1024 \\
 \hline
 Nonlinear activation & Leaky ReLU & Leaky ReLU\\
 \hline
 Architecture & Convolutional & Convolutional \\
 \hline
 Training epochs & 5 & 10\\
 \hline
 Learning rate & 0.01 & 0.001 \\
 \bottomrule
\end{tabular}
\label{table:bnns_vi}
}
\end{table}

\begin{table}[ht]
\centering {
\caption{
Hyperparameters for training BNNs with HMC for results reported in Figure \ref{fig:bayesmnist}. * indicates the parameters used in Table \ref{tab:attackcomp}.}
\begin{tabular}{ p{3.5cm}||p{4cm} }
 \multicolumn{2}{c}{\textbf{HMC MNIST/Fashion MNIST grid search}}\\
 \toprule
 Posterior samples & \{250*, 500, 750\}\\
 \hline
 Numerical Integrator Stepsize & \{0.025*, 0.01, 0.005, 0.001, 0.0001\} \\
 \hline
 Numerical Integrator Steps & \{10, 15, 20*\} \\
 \hline
 Hidden size & \{128, 256, 512*\}\\
 \hline
 Nonlinear activation & \{relu*, tanh, sigmoid\}\\
 \hline
 Architecture & \{1*,2,3\} fully connected layers\\
 \bottomrule
\end{tabular}
\label{table:MNIST_hmc}}
\end{table}

\begin{table}[ht]
\centering {
\caption{
Hyperparameters for training BNNs with VI fore results reported in Figure \ref{fig:bayesmnist}. * indicates the parameters used in Table \ref{tab:attackcomp}.}
\begin{tabular}{ p{3.5cm}||p{4cm} }
 \multicolumn{2}{c}{\textbf{VI MNIST/Fashion MNIST grid search}}\\
 \toprule
 Learning Rate & \{0.001*\} \\
  \hline
 Minibatch Size & \{128*, 256, 512, 1024\} \\
 \hline
 Hidden size & \{64, 128, 256, 512*, 1024\}\\
 \hline
 Nonlinear activation & \{relu*, tanh, sigmoid\}\\
 \hline
 Architecture & \{1*,2,3\} fully connected layers\\
  \hline
 Training epochs & \{3,5,7,9,12,15*\} epochs\\
 \bottomrule
\end{tabular}
\label{table:MNIST_VI}}
\end{table}

\begin{table}[ht]
\centering  {
\caption{
Hyperparameters for training NNs with SGD for results reported in Figure \ref{fig:bayesmnist}. * indicates the parameters used in Table \ref{tab:attackcomp}.}
\begin{tabular}{ p{3.5cm}||p{4cm} }
 \multicolumn{2}{c}{\textbf{SGD MNIST/Fashion MNIST grid search}}\\
 \toprule
 Learning Rate & \{0.001, 0.005, 0.01, 0.05*\} \\
  \hline
 Hidden size & \{64, 128, 256, 512*\}\\
 \hline
 Nonlinear activation & \{relu*, tanh, sigmoid\}\\
 \hline
 Architecture & \{1*, 2, 3, 4, 5\} fully connected layers\\
  \hline
 Training epochs & \{5, 10, 15, 20*, 25\} epochs\\
 \bottomrule
\end{tabular}
\label{table:MNIST_SGD}}
\end{table}

\iffalse

\subsection{Additional Figures}
Additional analysis are reported in Figures \ref{fig:LEarnedBNNManifold} and \ref{fig:half_moons_hmc}.

\begin{table}[ht]
\centering  {
\caption{
Hyperparameters for training BNNs for Figure \ref{fig:half_moons_hmc}}
\begin{tabular}{ p{3cm}||p{3cm} }
 \multicolumn{2}{c}{\textbf{Half moons grid search}}\\
 \toprule
 Posterior samples & \{250\}\\
 \hline
 HMC warmup samples & \{100, 200, 500\} \\
 \hline
 Training inputs & \{5000, 10000, 15000\}\\
 \hline
 Hidden size & \{128, 256, 512\}\\
 \hline
 Nonlinear activation & Leaky ReLU\\
 \hline
 Architecture & 2 fully connected layers\\
 \bottomrule
\end{tabular}
\label{table:half_moons_hmc}}
\end{table}

\begin{figure}[t]
\centering
\includegraphics[width=0.8 \columnwidth]{}
\caption{
Plot of the two components of the expected loss gradient for BNNs trained on 100 two-dimensional test points from the Half Moons dataset \citep{halfmoons}. 
We used a collection of HMC BNNs, by varying the number of hidden neurons (a different number in each subplot) and training points (different color and size of dots). Each dot represents a different NN.  
Only models with test accuracy greater than 80\% are plotted.}
\label{fig:half_moons_hmc}
\end{figure}

\fi

\end{document}